\newcommand{\keywords}[1]{\par\addvspace\baselineskip
\noindent\keywordname\enspace\ignorespaces#1}
\colorlet{JL}{green!20!yellow!60!white} \colorlet{JLline}{JL!80!black}
\colorlet{SL}{red!20!yellow!60!white}
\colorlet{SLline}{SL!80!black}
\colorlet{SK}{blue!10!white}
\colorlet{SKline}{SK!80!black}
\newcommand{\la}{\langle}
\newcommand{\ra}{\rangle}
\newcommand{\D}{\mathcal{D}}
\newcommand{\N}{{\cal N}}
\newcommand{\black}[1]{\textcolor{black}{#1}}
\newcommand{\dpc}{\ensuremath{\mathbf{DPC}}\xspace}
\newcommand{\dpcplus}{\ensuremath{\mathbf{DPC^*}}\xspace}
\newcommand{\csp}{\mathbf{CSP}}
\DeclarePairedDelimiterX\Set[1]\{\}{%
  
  #1 }
\begin{document}

%\listoftodos

\mainmatter  % start of an individual contribution

% first the title is needed
\title{Exploring Directional Path-Consistency for Solving {Constraint Networks}}

% a short form should be given in case it is too long for the running head
\titlerunning{Lecture Notes in Computer Science: Authors' Instructions}

% the name(s) of the author(s) follow(s) next
%
% NB: Chinese authors should write their first names(s) in front of
% their surnames. This ensures that the names appear correctly in
% the running heads and the author index.
%
\author{Shufeng Kong\textsuperscript{1} \and Sanjiang Li\textsuperscript{1} \and Michael Sioutis\textsuperscript{2}}
%
%\authorrunning{Lecture Notes in Computer Science: Authors' Instructions}
%% (feature abused for this document to repeat the title also on left hand pages)
%
%% the affiliations are given next; don't give your e-mail address
%% unless you accept that it will be published
\institute{\textsuperscript{1} \text{ } QSI, FEIT, University of Technology Sydney, Sydney, Australia\\
\mailsa\\
\textsuperscript{2} \text{ } AASS, \"{O}rebro University, \"{O}rebro, Sweden\\
\mailsb\\
}

%
% NB: a more complex sample for affiliations and the mapping to the
% corresponding authors can be found in the file ''llncs.dem''
% (search for the string ''\mainmatter'' where a contribution starts).
% ''llncs.dem'' accompanies the document class ''llncs.cls''.
%

\toctitle{Lecture Notes in Computer Science}
\tocauthor{Authors' Instructions}
\maketitle

\begin{abstract}
Among the local consistency techniques used for solving constraint networks, path-consistency (PC) has received a great deal of attention. However, enforcing PC is computationally expensive and sometimes even unnecessary. Directional path-consistency (DPC) is a weaker notion of PC that considers a given variable ordering and can thus be enforced more efficiently than PC. {This paper shows that \dpc (the DPC enforcing algorithm of Dechter and Pearl) decides the constraint satisfaction problem (CSP) of a constraint language $\Gamma$ if it is complete and has {the} variable elimination property (VEP). However, we also show that no complete VEP constraint language {can} have a domain with more than 2 values.}

\quad\quad {We then present a simple variant of the \dpc algorithm, called \dpcplus, and show that the CSP of a constraint language can be decided by \dpcplus if it is  closed under a majority operation. In fact, \dpcplus is sufficient for guaranteeing backtrack-free search for such constraint networks. Examples of majority-closed constraint classes include the classes of connected row-convex (CRC) constraints and tree-preserving constraints, which have found applications in various domains, such as scene labeling, temporal reasoning, geometric reasoning, and logical filtering. Our experimental evaluations show that \dpcplus significantly outperforms the state-of-the-art algorithms for solving majority-closed constraints.} 

%\magenta{\textbf{NOTE}:}\quad  we can't say the class of majority closed constraints, there are indeed many different majority closed constraint classes. 

\keywords{Path-consistency; directional path-consistency; {constraint networks}}
\end{abstract}

\section{Introduction}

Many Artificial Intelligence tasks can be formulated as \emph{constraint networks} \cite{montanari1974networks}, such as natural language parsing \cite{maruyama1990structural}, temporal reasoning \cite{DechterMP91,planken2008p3c}{,} and spatial reasoning \cite{LiLW13}. A constraint network {comprises} a set of variables {ranging over some domain of possible values, and a set of constraints that specify allowed value combinations for these variables}. Solving a constraint network amounts to {assigning} values {to its variables} {such that} its constraints are satisfied. \emph{{Backtracking search}} is the principal mechanism for solving a constraint network{;} it assigns values to variables in a depth-first manner, and {backtracks} to the previous {variable} assignment {if there are no consistent values for the variable at hand}. \emph{Local consistency} techniques are commonly used to reduce the size of the search space before commencing search. However, searching for a complete solution for a constraint network is still hard{. In fact, even deciding whether the constraint network has a solution is NP-complete in general}. Therefore, {given a particular form of local consistency,} {a crucial} {task is to determine problems that can be solved by \emph{backtrack-free} search using that local consistency \cite{Freuder82}.} 

This paper considers a particular form of local consistency, called \emph{path-consistency} (PC), which is one of the most important and heavily studied  local consistencies in the literature (see e.g. \cite{mackworth1977consistency,mohr1986arc,DBLP:journals/ijait/Singh96,bliek1999path,chmeiss1998efficient}). Recently, it was {shown} that {PC can be used to decide {the satisfiability of a problem} if and only if the problem does not have the \emph{ability to count}} \cite{barto2014constraint,barto2014collapse}; however, it remains unclear whether backtrack-free search can be used to extract a solution for such a problem after enforcing PC.

Directional path-consistency (DPC) \cite{dechter1987network} is a weaker {notion} of PC that {considers a given variable ordering and} can {thus} be enforced more efficiently {than PC}. 
The DPC enforcing  algorithm of Dechter and Pearl \cite{dechter1987network}, denoted by \dpc, has been used to efficiently solve reasoning problems {in} temporal reasoning \cite{DechterMP91,planken2008p3c} and spatial reasoning \cite{sioutis2016efficiently}. It is then natural to ask \emph{what binary constraint networks {with finite domains} can be solved by \dpc}. Dechter and Pearl \cite{dechter1987network} \black {showed} that \dpc is sufficient {for enabling} backtrack-free search for networks with constraint graphs of \emph{regular width} 2. We consider the aforementioned question  
in the context of \emph{constraint languages}, which is a widely adopted approach in the study of tractability of constraint satisfaction problems \cite{CarbonnelC15}.  {Specifically}, we are interested in finding all binary constraint languages $\Gamma$ {such that} the consistency of any constraint network defined over $\Gamma$ can be decided by  {\dpc}.

To this end, we first exploit the close connection between DPC and \emph{variable elimination} by defining constraint languages that have the (weak) variable elimination property (VEP) (which will become clear in Definition~\ref{dfn:vep}). We call a constraint language $\Gamma$  \emph{complete} if it contains all relations that are definable in $\Gamma$ {(in the sense of {Definition}~\ref{dfn:definable})}. Then, we show that the constraint satisfaction problem (CSP) of a complete constraint language $\Gamma$ can be decided by \dpc if it is complete and has VEP, which is shown to be equivalent to the Helly property. However, we also show that no complete VEP constraint language {can} have a domain with more than 2 values. 

We then present a simple variant of the algorithm \dpc, called \dpcplus, and show that the consistency of a constraint network can be decided by \dpcplus if it is defined over any majority-closed constraint language.  In fact, we show that \dpcplus is sufficient {for guaranteeing} backtrack-free search for {such} constraint networks. Several important constraint classes have been found to be majority-closed. The most well-known one is the class of \emph{connected row-convex (CRC)} constraints \cite{DevilleBH99}, which is further generalized to a larger class of \emph{tree-preserving} constraints \cite{KongLLL15}. The class of CRC constraints has been successfully applied to temporal reasoning \cite{DBLP:conf/aips/Kumar05}, logical filtering \cite{DBLP:conf/aips/KumarR06}, and geometric reasoning \cite{kumar_geometric_2004}, and the class of tree-preserving constraints can model a large subclass of the scene labeling problem \cite{KongLLL15}. We also conduct experimental evaluations to compare \dpcplus to the state-of-the-art algorithms for solving majority-closed constraints, and show that \dpcplus significantly outperforms the latter algorithms.

%\magenta{\textbf{NOTE}:}\quad  we here study constraint classes, it seems that a subclass of constraints not appropriate.

The remainder of this paper is organized as follows. In {Section~\ref{sec:preliminaries}} we introduce basic notions and results that will be used throughout the paper. In Section~\ref{sec:DPC_Algo} we present the \dpc algorithm, and {in Section~\ref{sec:DPC_VE} we discuss} the connection between \dpc and variable elimination. 
%In particular, in Section~\ref{sec:DPC_VE}, we define two kinds of constraint languages, viz., one that has VEP and one that has weak VEP, and prove that a  constraint language  $\Gamma$ has VEP iff  $\Gamma$ has the Helly property. 
In Section~\ref{sec:majority-closed} we prove that a complete constraint language $\Gamma$ has weak VEP if and only if $\Gamma$ is majority-closed. We then present in Section~\ref{dpcplus} our variable elimination algorithm \dpcplus, and empirically evaluate \dpcplus in Section~\ref{evaluation}. Finally, Section~\ref{conclusion} concludes the paper.

\section{Preliminaries}\label{sec:preliminaries}
This section recalls necessary notions and results.
\begin{definition} 
A \emph{binary constraint network} (BCN) $\mathcal{N}$ is a triple $\la V,{\D},C \ra$, where
\begin{itemize}
\item $V = \{v_1,\ldots,v_n\}$ is a nonempy finite set of variables;
\item ${\D}=\{D_1,\ldots,D_n\}$, where $D_i$ is the domain of $v_i$;
\item $C= \{(s_p, R_p)\mid 1\leq p\leq m\}$ is a set of \emph{binary constraints}, where $s_p=(v_i,v_j)(i\ne j)$ (called the \emph{scope} of $(s_p,R_p)$) is a pair of variables in $V$  and $R_p$ (called the \emph{constraint relation} of $(s_p,R_p)$)  is a subset of $D_i \times D_j$. 
\end{itemize} 
\end{definition}

Given a BCN $\mathcal{N}=\la V,{\D},C\ra$ and any pair of variables $(v_i,v_j)$ with $v_i,v_j\in V$ and $v_i\not=v_j$, we assume that there exists at most one constraint between {the pair}. For simplicity, we will {often} denote the {constraint} between $v_i$ and $v_j$ by $R_{ij}$. Further, we assume $R_{ij}=R_{ji}^{-1}$, which is the inverse of $R_{ji}$. We write $R_{ij} \in C$ throughout the paper to {state} that a constraint with scope $(v_i,v_j)$ is in $C$. Usual operations on relations such as intersection ($\cap$), composition ($\circ$), and inverse~($^{-1}$) are also assumed.

A \emph{partial solution} of $\mathcal{N}$ w.r.t. a subset $V^\prime$ of $V$ is an assignment of values to variables in $V^\prime$ such that all of the constraints $R_{ij}$ with $v_i,v_j\in V'$  are satisfied. A partial solution w.r.t. $V$ is called a \emph{solution} of $\mathcal{N}$. We say that $\mathcal{N}$ is \emph{consistent} or \emph{satisfiable} if it admits \black {a} solution, and \emph{inconsistent} or \emph{unsatisfiable} otherwise. A BCN $\mathcal{N}=\la V,{\D},C\ra$ is said to be \emph{globally consistent} if every partial solution w.r.t. $V^\prime\subseteq{V}$ can be consistently extended to a solution w.r.t. $V$. Further, $\mathcal{N}$ is said to be \emph{trivially inconsistent} if $C$ contains an empty constraint or $\D$ contains an empty domain. \black {Two BCNs 
%\black {$\mathcal{N}$ and $\mathcal{N}'$}
%$\mathcal{N}=\la V,{\D},C\ra$ and $\mathcal{N}'=\la V, {{\D}'}, C'\ra$
are \emph{equivalent} if they have the same set  of solutions.}

\begin{example}\label{example-1}
Consider a BCN $\mathcal{N}=\la V, \D, C\ra$, where 
\begin{itemize}
\item $V=\Set{v_1, v_2, v_3, v_4}$;
\item $D_i = \Set{a,b,c}$ for $i=1,2,3,4$;
\item $C=\Set{R_{12},R_{23},R_{34},R_{24}}$ where

{\small
$R_{12}=
\begin{bmatrix}
1 & 1 & 1\\
1 & 1 & 0\\
1 & 0 & 0
\end{bmatrix}
$,   
$R_{34}=
\begin{bmatrix}
1 & 1 & 1\\
1 & 0 & 0\\
1 & 0 & 0
\end{bmatrix}
$, 
and $R_{32}=R_{24}=
\begin{bmatrix}
1 & 1 & 1\\
0 & 1 & 1\\
0 & 0 & 1
\end{bmatrix}
$.
}
\end{itemize}
We use Boolean matrices to represent binary relations. For example, $R_{12}$ represents the relation $\Set{\la a,a\ra, \la a,b\ra, \la a,c\ra, \la b,a\ra, \la b,b\ra, \la c,a\ra}$ between $v_1$ and $v_2$, where the values in both $D_1$ and $D_2$ are ordered as $a\prec b \prec c$. It is easy to check that $\sigma=\la a,a,a,a\ra$ is a solution of $\mathcal{N}$. 
\end{example}
\begin{figure}[htb]
    \centering   \includegraphics[width=0.3\linewidth]{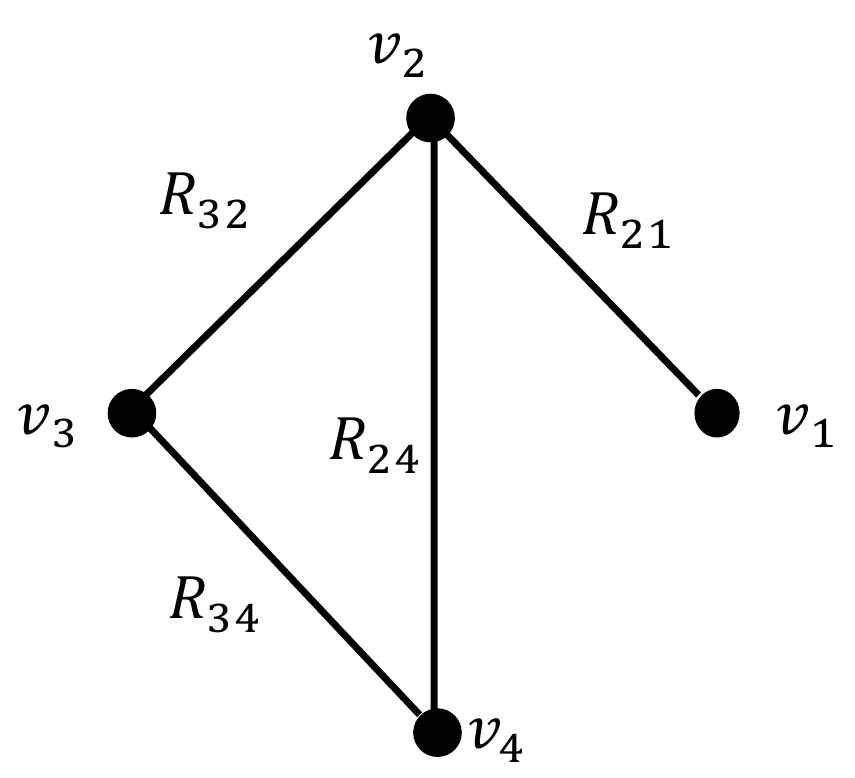}
    \caption{The constraint graph $G_{\N}$ of the BCN in Example~\ref{example-1}. Note that $\la v_1,v_2,v_3,v_4\ra$ is a PEO in $G_{\N}$, whereas $\la v_2,v_1,v_3,v_4\ra$ is not. \label{fig:exam1}}
  \end{figure}
  
The \emph{constraint graph} {$G_{\mathcal{N}}$} of $\mathcal{N}=\la V,{\D},C\ra$ is the \emph{undirected} graph $(V,E)$, where $e_{ij}\in E$ iff $R_{ij}\in C$. We assume $e_{ij}$ is always labeled with its corresponding constraint $R_{ij}$. Fig.~\ref{fig:exam1} shows the constraint graph of the BCN in Example~\ref{example-1}.

%%%%%%%%%

\begin{figure}[t]
    \centering   \includegraphics[width=0.4\linewidth]{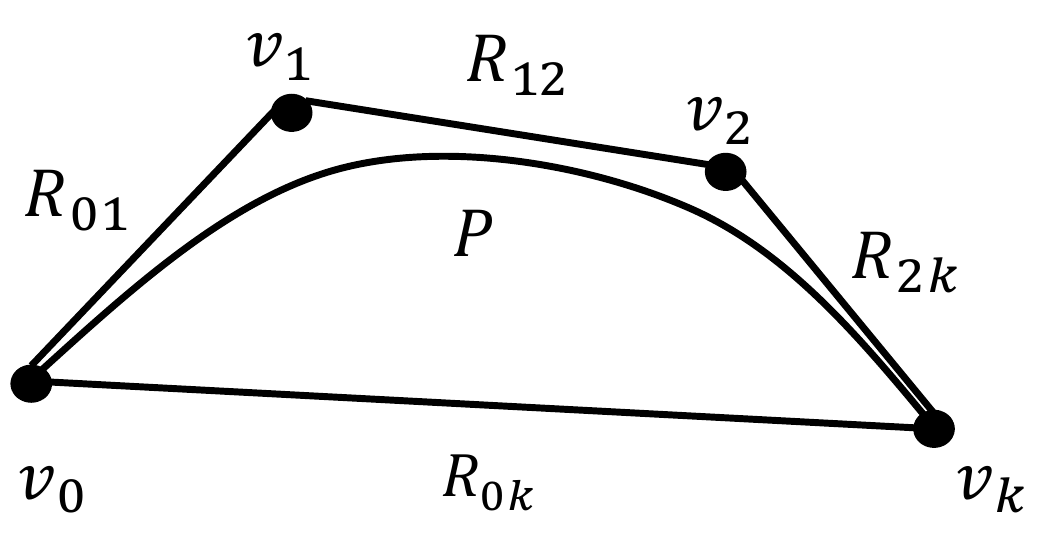}
    \caption{Path-Consistency \cite{bliek1999path}. \label{PC_example}}
  \end{figure}

%%%%%%%%%%

An undirected graph $G$ is \emph{triangulated} or \emph{chordal} if every cycle of length greater than 3 has a \black {\emph{chord}}, \emph{i.e.}, an edge connecting two non-consecutive vertices of the cycle. The constraint graph $G_{\mathcal{N}}$ of a network $\mathcal{N}=\la V,{\D},C \ra$ can be completed or triangulated by adding new edges $e_{ij}$ {labeled with the universal constraint {$D_i\times D_j$}.}

Triangulated graphs play a key role in efficiently solving large sparse constraint networks
\cite{bliek1999path,planken2008p3c,sioutis2016efficiently}. A graph $G$ is triangulated iff it admits a \emph{perfect elimination ordering} (PEO) \cite{fulkerson1965incidence}. An ordering
  $\mathord{\prec}$ of the vertices of a graph $G = (V, E)$ is a PEO if $F_v = \{w \mid (v,w) \in E, v \prec w\}$ for all of $v \in V$, \black {\emph{i.e.,}} the set of successors of  $v$ \black {in the ordering} induces a complete subgraph of $G$ (see Fig.~\ref{fig:exam1} for an example).

A variable $v_i$ is \emph{arc-consistent} (AC) relative to a variable $v_j$ (or $R_{ij}$) if for any $a\in D_i$ {we} have some $b\in D_j$ such that $\langle a,b \rangle \in R_{ij}$. Given a constraint graph $G_{\mathcal{N}}=(V,E)$, {an edge $e_{ij}$ of $G_{\mathcal{N}}$ is AC} if $v_i$ is AC relative to $v_j$. 
Let $\pi=(v_0,\cdots,v_i,\cdots,v_k)$ be a path in $G_{\mathcal{N}}$ with $e_{0k} \in E$. We say {that} $\pi$ is \emph{path-consistent} (PC) if for all $\langle c_0,c_k \rangle \in R_{0k}$ {we} can find values for all intermediate variables $v_i$ $(0<i<k)$ such that all the constraints $R_{i,i+1}\ (0\leq i<k)$ are satisfied. See Fig.~\ref{PC_example} for an illustration. In particular, {$(v_0,v_2)$ is PC} relative to a third vertex $v_1$ if $e_{01},e_{12}$, and $e_{02}$ are all in $E$ and the path $\pi=(v_0,v_1,v_2)$ is PC. A constraint graph $G_{\mathcal{N}}$ is AC (resp. PC) iff all edges (resp. paths) in $G_{\mathcal{N}}$ are AC (resp. PC); $G_{\mathcal{N}}$ is \emph{strong\black {ly} PC} iff it is both AC and PC. A  constraint network $\mathcal{N}$ is AC if $G_{\mathcal{N}}$ is AC and $\N$ is PC if the competition of $G_{\N}$ is PC \cite{bliek1999path}.

Consider the BCN $\mathcal{N}$ in Example~\ref{example-1}. We can see that every edges in $G_{\mathcal{N}}$ is AC, but the path $\pi=(v_3, v_2,v_4)$ is not PC as $R_{34}$ is not contained in $R_{32}\circ R_{24}$.

In this paper we are concerned with BCNs defined over a particular \emph{constraint language} and we use \emph{constraint languages}, \emph{constraint classes} and \emph{sets of relations} interchangeably.

\begin{definition} \cite{jeavons1998constraints}
Let  ${\D}=\{D_1, \ldots, D_n\}$ be a set of domains. An $n$-ary relation $R$ over ${\D}$ is a subset of $D_1 \times \ldots \times D_n$. For any tuple $t\in R$ and any $1\le i\le n$, we {denote}  by $t[i]$ the value in the $i$-th coordinate position of $t$ and write $t$ as $\langle t[1],\ldots,t[n] \rangle$.
\end{definition}

\begin{definition} \cite{jeavons1998constraints}\label{dfn:definable}
Given a set of binary relations $\Gamma$, we write $\Gamma^+$ for the set of relations that can be obtained from $\Gamma$ using some sequence of the following operations: 
\begin{itemize}
\item \emph{Cartesian product}, \textit{i.e.}, for $R_1,R_2\in \Gamma, R_1 \times R_2=\{\langle t_1,t_2 \rangle \mid t_1\in R_1,t_2\in R_2\}$, 
\item \emph{equality selection}, \textit{i.e.}, for $R \in \Gamma, {\tau_{i=j}(R)=\{t\in R \mid t[i]=t[j]\}}$, and
\item \emph{projection}, \textit{i.e.}, for $R \in \Gamma, \pi_{i_1,\cdots,i_k}(R)=\{\langle t[i_1],\cdots,t[i_k] \rangle \mid t\in R\}$. 
\end{itemize}

\black A relation $R$ is said to be \emph{definable} in $\Gamma$ if $R\in {\Gamma^+}$\black {, and a} set of binary relations $\Gamma$ is said to be \emph{complete} if every binary relation definable in $\Gamma$ is also contained in $\Gamma$. The \emph{completion} of $\Gamma$, written as $\Gamma^c$, is the  set of all binary relations contained in $\Gamma^+$.
\end{definition}

%\delete{For any $\Gamma$, we write $\Gamma^+_b$ for the set of binary relations contained in $\Gamma^+$. It is easy to see that $\Gamma^+_b$ is a complete binary constraint language.} 
The following lemma {asserts} that a complete set of binary relations $\Gamma$ is closed under the operations that are used to achieve PC and AC.

\begin{lemma} \cite{cohen2006_CP_handbook} \label{lem:closed_under_ac}
{
Let $\Gamma$ be a complete set of binary relations over a domain $D$. Suppose $R,S$ are binary relations and $T$ a unary relation, all in $\Gamma$. Then 
$R\cap S$, $R\circ S$, and 
$T'=\{a \in D \mid \langle a,b\rangle\in R,\ b\in T\}$
are also all in $\Gamma$.
}
\end{lemma}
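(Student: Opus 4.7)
The plan is to express each of the three operations as a composition of the three primitive operations (Cartesian product, equality selection, projection) whose closure defines $\Gamma^+$. Since $\Gamma$ is complete, any binary relation obtainable this way is already in $\Gamma$, and the unary case for $T'$ is handled analogously by the same apparatus.

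For intersection, I would verify the identity
\[
R\cap S \;=\; \pi_{1,2}\bigl(\tau_{1=3}(\tau_{2=4}(R\times S))\bigr).
\]
The Cartesian product forms the 4-ary concatenation of tuples from $R$ and $S$; the two equality selections force the $R$-component $\langle a_1,b_1\rangle$ and the $S$-component $\langle a_2,b_2\rangle$ to coincide coordinatewise; finally the projection returns the binary relation of common pairs. For composition, the identity
\[
R\circ S \;=\; \pi_{1,4}\bigl(\tau_{2=3}(R\times S)\bigr)
\]
suffices: the equality selection on coordinates $2$ and $3$ implements the existential join on the intermediate variable, and the projection keeps only the endpoints. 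For the back-projection, writing $T$ as a unary relation, we have
\[
T' \;=\; \pi_1\bigl(\tau_{2=3}(R\times T)\bigr),
\]
because the equality selection restricts the second coordinate of $R$ to values in $T$.

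The main obstacle is really just bookkeeping: one must check that every intermediate relation used in these derivations indeed belongs to $\Gamma^+$, and that the final outputs (which are binary or unary) lie in $\Gamma$ under the completeness hypothesis. In particular, the intermediate relations are ternary or quaternary, so one has to rely on the fact that $\Gamma^+$ is defined to contain relations of all arities, even though completeness is stated only for the binary (and, by convention, unary) fragment. Once the three displayed identities are verified by routine set-theoretic unfolding, the lemma follows immediately: each left-hand side is the image of relations in $\Gamma$ under a sequence of operations that keep us inside $\Gamma^+$, hence in $\Gamma$ by completeness.
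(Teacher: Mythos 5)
Your three identities are correct (under the paper's convention that $R\circ S=\{\langle a,c\rangle\mid \exists b:\langle a,b\rangle\in R,\ \langle b,c\rangle\in S\}$), and the appeal to completeness at the end is exactly what is needed; the paper itself gives no proof of this lemma, citing it from the constraint-programming handbook, and your derivation is the standard one underlying that citation. The only point worth flagging is the bookkeeping issue you already noticed: the intermediate ternary and quaternary relations live in $\Gamma^+$ but not in $\Gamma$, and the completeness hypothesis as stated in Definition~\ref{dfn:definable} covers only the binary fragment, so the claim that the unary relation $T'$ lies in $\Gamma$ tacitly treats unary relations as degenerate binary ones (e.g.\ via the diagonal $\{\langle a,a\rangle\mid a\in T'\}$) --- a convention the paper leaves implicit rather than a gap in your argument.
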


Let $\Gamma$ be a set of binary relations. A BCN $\mathcal{N}=\la V,{\D},C \ra$ is \emph{defined over} (or, simply, \emph{over}) $\Gamma$ if $R\in \Gamma$ for every constraint $(s, R)$ in $C$. 
The constraint satisfaction problem (CSP) of $\Gamma$, \black {denoted by} $\csp(\Gamma)$, is \black {the problem of} deciding the consistency of BCNs defined over $\Gamma$. $\csp(\Gamma^+)$ is log-space reducible to $\csp(\Gamma)$ \cite{cohen2006_CP_handbook}.

 A set of binary relations $\Gamma$ is \emph{weakly closed under singletons}, if $\{\langle a,b \rangle\} \in \Gamma^+$ for any $R \in \Gamma$ and any $\langle a,b \rangle \in R$. 
 
%\magenta{\textbf{NOTE}:}\quad if $\Gamma$ is majority-closed then $\Gamma$ is not necessarily weakly closed under singletons.

In this paper we often assume that the constraint languages are \emph{complete} and \emph{weakly closed under singletons}. We will see that this is not very restrictive as, for any set  $\Gamma$ of binary relations  that is closed under a majority operation $\varphi$, the completion $\Gamma^c$ of $\Gamma$ is also closed under $\varphi$ \cite{jeavons1998constraints} and weakly closed under singletons (cf. Proposition~\ref{majority-is-wcus}).

%In this paper we often assume that the constraint languages are weakly closed under singletons because if a set of binary relations $\Gamma$ is majority-closed then $\Gamma$ is weakly closed under singletons (cf. Proposition~\ref{majority-is-wcus}). Similarly, we also often assume that the constraint languages are complete because if a constraint language $\Gamma$ is majority-closed then its completion $\Gamma^c$ is majority-closed as well \cite{jeavons1998constraints}.  

%%%%%%%%%%%%%%%%%
\begin{algorithm}[t]  
   \DontPrintSemicolon
   %\algsetup{linenosize=\tiny}
   \footnotesize
   \SetAlCapFnt{\footnotesize}
   \SetAlCapNameFnt{\footnotesize}
   \SetAlFnt{\footnotesize}
   \SetKwInOut{Input}{Input}

   \SetKwInOut{Output}{Output}

   \Input{A binary constraint network $\mathcal{N}=\la V,{\D},C \ra$; \newline an ordering $\prec$ $=$ $(v_1,\ldots,v_n)$ on $V$.}
   \Output{An equivalent \black {sub}network that is strongly {DPC} relative to $\prec$, or ``Inconsistent''. }
   \BlankLine

        \For{$k\gets n$ \KwTo $1$}{
        
            \For{{$i < k$} \emph{with} $R_{ik}\in C$}{
            
                   $D_i \gets D_i \cap R_{ki}({D_k})$;
                   
                   \If{{$D_i= \varnothing$}}{\label{ln:dpc_ac}
                   
                   \Return{{``Inconsistent''}}
                   
                   }
                   }

                   \For{{$i,j < k$} \emph{with} $R_{ik},R_{jk} \in C$}{
            
                    \If{$R_{ij} \not\in C$}{
                         $R_{ij}\gets D_i \times D_j$\label{algo1:ln8};
                         
                         $C \leftarrow C \cup \{R_{ij}\}$;
                    }
            
                    $R_{ij} \leftarrow R_{ij} \cap (R_{ik} \circ R_{kj})$;
                
                     \If{$R_{ij} = \varnothing$}{
                
                          \Return{``Inconsistent''};
                    }
             }
          
       }
\Return{$\mathcal{N}$}.
\caption{\dpc}\label{algorithm_strongppc}
\end{algorithm}
%%%%%%%%%%%%%%%%%

\section{The Strong {Directional PC} Algorithm}\label{sec:DPC_Algo}

%%%%%%%%%%%%%
\begin{figure}[t]
  \centering
  \subcaptionbox{The constraint graph $G_{\N}$ of a graph \\ coloring problem $\N$.}[.45\linewidth]{\includegraphics[width=.25\linewidth]{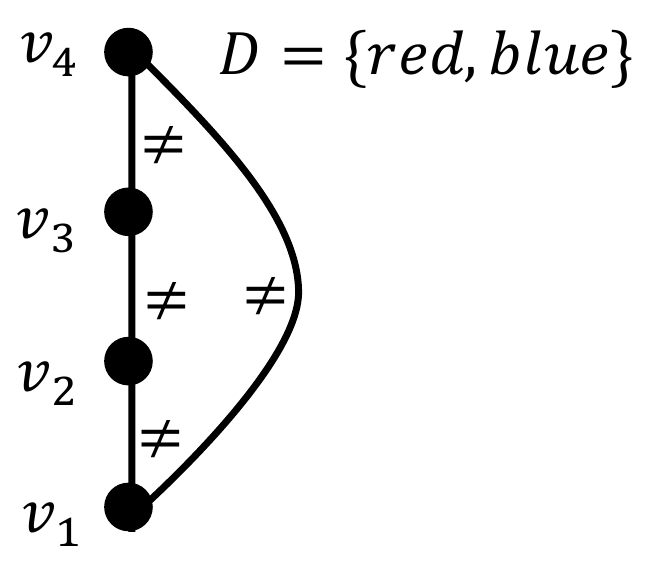}}
  \subcaptionbox{The constraint graph obtained by applying Algorithm~\ref{algorithm_strongppc} to $(\N,\prec)$ where $\prec$ $=$ $(v_1,v_2,v_3,v_4)$.\label{fig:coloring:b}}[.4\linewidth]{\includegraphics[width=.16\linewidth]{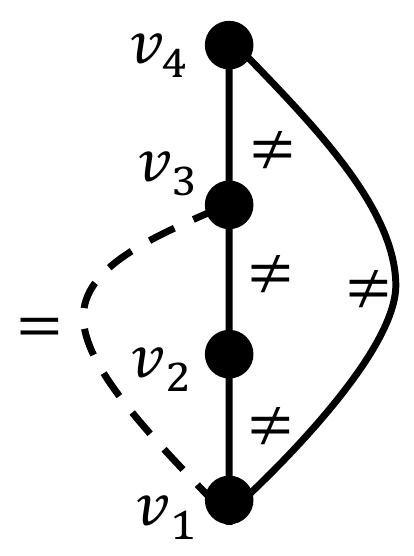}}
  \caption{A graph coloring problem with domain $D_i=\{red, blue\}$ for $i=1,2,3,4$ \cite{dechter2003constraint}.}\label{fig:coloring}
\end{figure}

%%%%%%%%%%%%%

This section recalls the notions of \emph{directional arc-consistency} (DAC) and \emph{directional path-consistency} (DPC), and the \emph{strong DPC} enforcing algorithm of Dechter and Pearl \cite{dechter2003constraint}.

\begin{definition} \cite{dechter2003constraint}
Let $\mathcal{N}=\la V,{\D},C\ra$ be a BCN and $\prec=(v_1,\ldots,v_n)$ an ordering of the variables in $V$. We say that $\mathcal{N}$ is \emph{directionally arc-consistent (DAC)} relative to $\prec$ if $v_i$ is arc-consistent relative to $v_k$ for all  $k> i$ with $R_{ik}\in C$. Similarly, $\mathcal{N}$ is \emph{directionally path-consistent (DPC)} relative to $\prec$ if, for any $i\not=j$ with $R_{ij}\in C$,  $(v_i,v_j)$ is path-consistent relative to $v_k$ for all $k>i,j$ {whenever} $R_{ik}, R_{jk}\in C$. Meanwhile, $\mathcal{N}$ is \emph{strongly DPC} relative to $\prec$ if it is both DAC and DPC relative to $\prec$.
\end{definition}

The strong DPC algorithm is presented as Algorithm~\ref{algorithm_strongppc}. In comparison with traditional PC algorithms \cite{chmeiss1998efficient},
a {novelty} of this {single pass} algorithm is its explicit {reference to \black {the constraint} graph of the input constraint network}. As only Line~\ref{algo1:ln8} may require extra working space, Algorithm~\ref{algorithm_strongppc} has a {very low space complexity in practice}. Further, Algorithm~\ref{algorithm_strongppc} runs  in $O(w^*(\prec) \cdot e \cdot (\alpha + \beta))$ time \cite{dechter2003constraint}, where $e$ is the number of edges of the output constraint graph, $w^*(\prec)$ is the induced width  of the ordered graph along $\prec$, and $\alpha,\beta$ are the runtimes of relational intersection and composition respectively. Note that $w^*(\prec) \le n$ and $\alpha, \beta$ are bounded by $O(d)$, where $d$ is the largest domain size.

%Algo.~\ref{algorithm_strongppc} achieves strong  DPC relative to the {inverse of the} input ordering \cite{dechter2003constraint}.

\begin{proposition} \cite{dechter2003constraint}\label{lem:peo}
Let $(\mathcal{N},\prec)$ be an input to Algorithm~\ref{algorithm_strongppc}, {where $\prec$ $=$ $(v_1,\ldots,v_n)$}. Suppose $\mathcal{N}'=\la V, {{\D}'}, C'\ra$ is the output. Then 

%Suppose the algorithm does not return ``Inconsistent'' and $\mathcal{N}'=\la V, {{\D}'}, C'\ra$ is the output. Then  

\begin{itemize}
\item [(i)] {$G_{\mathcal{N}'}$ is triangulated and $\prec^{-1}$, the inverse of $\prec$, is a PEO of  $G_{\mathcal{N}'}$}; 
\item [(ii)] $\mathcal{N}'$ is equivalent to $\mathcal{N}$ and {strongly DPC  relative to $\prec$}. 
\end{itemize}
\end{proposition}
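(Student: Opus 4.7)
The plan is to prove both parts by analyzing what the algorithm does during a single outer iteration on variable $v_k$ and then verifying that subsequent iterations do not undo the invariants established at iteration $k$. Throughout, I will exploit the fact that the outer loop runs $k$ from $n$ down to $1$, and that after iteration $k$ completes, the domain $D_k$ and every constraint $R_{ik}$ with $i<k$ are no longer touched by the algorithm; only $D_i$ (for $i<k$) and $R_{ij}$ (for $i,j<k$) can be further tightened at later iterations $k'<k$.

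For part (i), I would argue that $\prec^{-1}=(v_n,\dots,v_1)$ is a PEO of $G_{\mathcal{N}'}$ directly from the body of the algorithm. At iteration $k$, the inner loop inspects every pair $i,j<k$ with $R_{ik},R_{jk}\in C$ and, if $R_{ij}\notin C$, adds the edge $e_{ij}$ with the universal constraint $D_i\times D_j$. Hence at the end of iteration $k$ the set $F_{v_k}=\{v_j : j<k,\ R_{jk}\in C'\}$ (the successors of $v_k$ under $\prec^{-1}$ in $G_{\mathcal{N}'}$) induces a clique. Subsequent iterations $k'<k$ can only add further edges but never delete any, so $F_{v_k}$ remains a clique in the final graph. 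By Fulkerson and Gross's theorem that admitting a PEO is equivalent to being triangulated, $G_{\mathcal{N}'}$ is triangulated.

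For part (ii), I would split into equivalence and strong DPC. Equivalence is proved by induction on the outer iteration, maintaining the invariant that every solution $\sigma$ of the input $\mathcal{N}$ remains a solution of the current network: line~\ref{ln:dpc_ac}'s pruning $D_i\gets D_i\cap R_{ki}(D_k)$ is safe because $\sigma(v_i)$ has witness $\sigma(v_k)$, which by IH still lies in the current $D_k$ and satisfies the current $R_{ik}$; the universal constraint added on line~\ref{algo1:ln8} is vacuously satisfied; and the composition-intersection $R_{ij}\gets R_{ij}\cap(R_{ik}\circ R_{kj})$ is safe because $\sigma(v_k)$ witnesses membership of $\langle\sigma(v_i),\sigma(v_j)\rangle$ in $R_{ik}\circ R_{kj}$. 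The converse direction is immediate, since the algorithm only shrinks domains and tightens or adds constraints. For strong DPC relative to $\prec$, I would observe that at the end of iteration $k$ the inner loops literally enforce, by construction, (a) that every $a\in D_i$ has a support $b\in D_k$ via $R_{ik}$ (DAC at $(i,k)$), and (b) that for every $\langle a,b\rangle\in R_{ij}$ there is $c\in D_k$ with $\langle a,c\rangle\in R_{ik}$ and $\langle c,b\rangle\in R_{kj}$ (DPC at $k$ for all $i,j<k$ with $R_{ik},R_{jk}\in C$).

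The main obstacle, and the step requiring the most care, is showing that these directional properties relative to $v_k$ are preserved under the remaining iterations $k'=k-1,\dots,1$. The argument is a monotonicity observation: later iterations can shrink $D_i,D_j$ and tighten $R_{ij}$ for $i,j<k$, but leave $D_k,R_{ik},R_{kj}$ untouched. Any $a$ still in $D_i$ at the end was already there at the end of iteration $k$, so its support in $D_k$ witnessing DAC is unchanged; and any tuple $\langle a,b\rangle$ remaining in the tightened $R_{ij}$ was present immediately after iteration $k$, so its PC-witness in $D_k$ survives. Combined with part (i), this gives that $\mathcal{N}'$ is strongly DPC relative to $\prec$, completing the proposition.
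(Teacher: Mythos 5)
The paper does not prove this proposition at all --- it is imported by citation from Dechter's book \cite{dechter2003constraint} --- so there is no in-paper argument to compare against; your proof is the standard one and is sound. The key observation you rely on (that after iteration $k$ the algorithm never again touches $D_k$, any edge incident to $v_k$, or the neighbourhood of $v_k$ among lower-indexed variables, so the clique, the arc-consistency supports, and the path-consistency witnesses established at iteration $k$ all persist) is exactly the right invariant, and the soundness/monotonicity argument for equivalence is correct.
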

%\magenta{We note that, in Algo.~\ref{algorithm_strongppc},  we let $\prec$ be the ordering $(v_n,...,v_1)$  (instead of its inverse $(v_1,...,v_n)$). This is because we want this to be the perfect elimination ordering of the constraint graph of the output network. In the next section, we will see that the procedure of the algorithm can also be understood as the process of eliminating the variables one by one following the ordering $(v_n,...,v_1)$. }

Let $\Gamma$ be a set of binary relations. We say that Algorithm~\ref{algorithm_strongppc} \emph{decides} $\csp(\Gamma)$ if, for any given BCN $\mathcal{N}$ in $\csp(\Gamma)$
and any ordering $\prec$ of variables of $\mathcal{N}$,
Algorithm~\ref{algorithm_strongppc} returns ``Inconsistent'' iff $\mathcal{N}$ is inconsistent.

The following corollary follows directly from Proposition~\ref{lem:peo}.
\begin{corollary}\label{coro:dpc}

%Let $\Gamma$ be a set of binary relations that are closed under the operations used to achieve AC and PC. Then the following two conditions are equivalent:

Let $\Gamma$ be a complete set of binary relations. Then the following two conditions are equivalent:
\begin{itemize}
\item [(i)] Algorithm~\ref{algorithm_strongppc} decides $\csp(\Gamma)$. 
\item [(ii)] Let $\mathcal{N}$ be a non-trivially inconsistent BCN in $\csp({\Gamma})$. Suppose $\N$'s constraint graph $G_{\N}$ is triangulated and let $\prec^{-1}$ $=$ $(v_n,\ldots,v_1)$ be a PEO of it. Then $\mathcal{N}$ is consistent if $\mathcal{N}$ is strongly DPC relative to $\prec$.
\end{itemize}
\end{corollary}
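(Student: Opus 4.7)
The plan is to prove the equivalence by leaning on Proposition~\ref{lem:peo} in both directions, together with a single structural observation: Algorithm~\ref{algorithm_strongppc} acts as the identity on any network that already satisfies the hypotheses of condition (ii). The rest of the argument is packaging this observation against the soundness/completeness of the algorithm.

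For the direction $(i) \Rightarrow (ii)$, I take a non-trivially inconsistent BCN $\mathcal{N} \in \csp(\Gamma)$ whose constraint graph $G_{\mathcal{N}}$ is triangulated with $\prec^{-1}$ a PEO, and which is strongly DPC relative to $\prec$. I then argue that running Algorithm~\ref{algorithm_strongppc} on $(\mathcal{N}, \prec)$ modifies nothing. When the outer loop processes index $k$, the PEO property applied to $v_k$ in $\prec^{-1}$ says that the set $\{v_i : i<k,\ R_{ik}\in C\}$ already forms a clique in $G_{\mathcal{N}}$; hence Line~\ref{algo1:ln8} is never triggered and no universal constraint is ever added. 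DAC gives $D_i \cap R_{ki}(D_k) = D_i$, and DPC gives $R_{ij} \cap (R_{ik} \circ R_{kj}) = R_{ij}$, so every filtering step is a no-op. Since no domain or constraint is empty, the algorithm cannot return ``Inconsistent''. Invoking hypothesis (i), this forces $\mathcal{N}$ to be consistent.

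For the direction $(ii) \Rightarrow (i)$, I run Algorithm~\ref{algorithm_strongppc} on an arbitrary input $(\mathcal{N},\prec)$ with $\mathcal{N}\in\csp(\Gamma)$. Soundness is immediate from the fact that each filtering step only removes values or tuples that cannot extend to any solution, so if the algorithm returns ``Inconsistent'' then $\mathcal{N}$ is indeed inconsistent. For the converse, suppose the algorithm terminates with output $\mathcal{N}'$. By Proposition~\ref{lem:peo}, $\mathcal{N}'$ is equivalent to $\mathcal{N}$, strongly DPC relative to $\prec$, with $G_{\mathcal{N}'}$ triangulated and $\prec^{-1}$ a PEO; and, since the algorithm did not abort, $\mathcal{N}'$ is not trivially inconsistent. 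I still need $\mathcal{N}' \in \csp(\Gamma)$: by Lemma~\ref{lem:closed_under_ac}, completeness of $\Gamma$ closes it under $\cap$, $\circ$, and the unary projection $R(D)$, which are exactly the operations producing new constraints in the algorithm; whenever Line~\ref{algo1:ln8} inserts a universal relation $D_i\times D_j$, it is immediately overwritten by $R_{ik}\circ R_{kj}\in\Gamma$, so the final constraints all lie in $\Gamma$. Condition (ii) then yields consistency of $\mathcal{N}'$, hence of $\mathcal{N}$.

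I expect the only genuinely delicate point to be ensuring the ``no-op'' claim in the first direction really holds in full: the PEO hypothesis must rule out the universal-constraint insertion, and the strong DPC hypothesis must rule out all shrinking. Both are dispatched by re-reading the relevant definitions against the loop structure of Algorithm~\ref{algorithm_strongppc}, so the corollary truly is a corollary of Proposition~\ref{lem:peo} rather than an independent result.
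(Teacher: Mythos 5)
Your proof is correct and follows the paper's intended route: the paper simply asserts that the corollary ``follows directly from Proposition~\ref{lem:peo}'', and your argument is the natural fleshing-out of that claim, with the key added observation (that \dpc acts as the identity on a network already satisfying the hypotheses of (ii), since the PEO condition blocks Line~\ref{algo1:ln8} and strong DPC makes every intersection a no-op) being exactly what is needed for the $(i)\Rightarrow(ii)$ direction. The closure argument via Lemma~\ref{lem:closed_under_ac} for keeping the output inside $\csp(\Gamma)$ is also handled correctly.
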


\begin{example}
The graph coloring problem $\N$ with domains $\{red,blue\}$ depicted in Fig.~\ref{fig:coloring} is taken from \cite{dechter2003constraint} and can be decided by Algorithm~\ref{algorithm_strongppc}. After applying Algorithm~\ref{algorithm_strongppc} to $(\N,\prec)$, where $\prec=(v_1,v_2,v_3,v_4)$, a solution can be obtained along $\prec$ in a backtrack-free {manner} (see Fig.~\ref{fig:coloring:b}). 

\end{example}

\section{{Directional PC} and Variable Elimination}\label{sec:DPC_VE}

This section characterizes {the binary constraint languages $\Gamma$ such that $\csp(\Gamma)$} can be decided by \dpc. We observe that \dpc achieves (strong) DPC using the idea of \emph{variable elimination} \cite{dechter2003constraint}: it iterates variables {along} the ordering $\prec^{-1}$, and propagates the constraints of a variable $v_k$ to subsequent variables in the ordering with the update rule $R_{ij} \leftarrow R_{ij}\cap (R_{ik} \circ R_{kj})$, as if $v_k$ is `\emph{eliminated}'. 

The following definition formalizes the process of elimination.
\begin{definition} \label{dfn:vep_network}
Let $\mathcal{N}=\la V,{\D},C \ra$ be a BCN with $V=\{ v_1,...,v_n \}$ and ${\D}=\{ D_1,...,D_n\}$. For  a variable $v_x$ in $V$, 
let $E_x  =\{R_{ix}\mid R_{ix}\in C\}$.
The network obtained after $v_x$ is eliminated from $\mathcal{N}$, written as  
\begin{align*}%\label{eq:N{-x}}
\mathcal{N}_{-x}  =\la V\setminus\{v_x\},\{D'_1, ..., D'_{x-1}, D'_{x+1}, ..., D'_n\},C'\ra,
\end{align*}
is defined as follows:

\begin{itemize}

\item If $E_x=\{R_{ix}\}$, we set $C' = C \setminus E_x$ and let
\begin{align}
D'_j &= \left\{\begin{array}{cl} \label{eqq:1neighbor}
        D_i\cap R_{xi}(D_x), & \quad \text{if } j=i\\ 
        D_j, & \quad \text{otherwise} 
        \end{array}
        \right.
\end{align}
\item If $|E_x|\not=1$,  we set $D'_j=D_j$ for all $j\not=x$, and let %\footnote{If $R_{ij}\not\in C$, then $R_{ij}$ is assumed to be $D_i\times D_j$.}  
\begin{align*}
C' = (C\setminus E_x)\cup \{R_{ix}\circ R_{xj} \cap R_{ij} \mid R_{jx},R_{ix}\in E_x, i \ne j\}.
\end{align*}
\end{itemize}
\end{definition}

$R_{ij}$ is assumed to be $D_i\times D_j$ if $R_{ij}\not\in C$. 

Fig.~\ref{fig:vep} illustrates the elimination process.

\begin{figure}[t!]
\centering
\includegraphics[width=270pt]{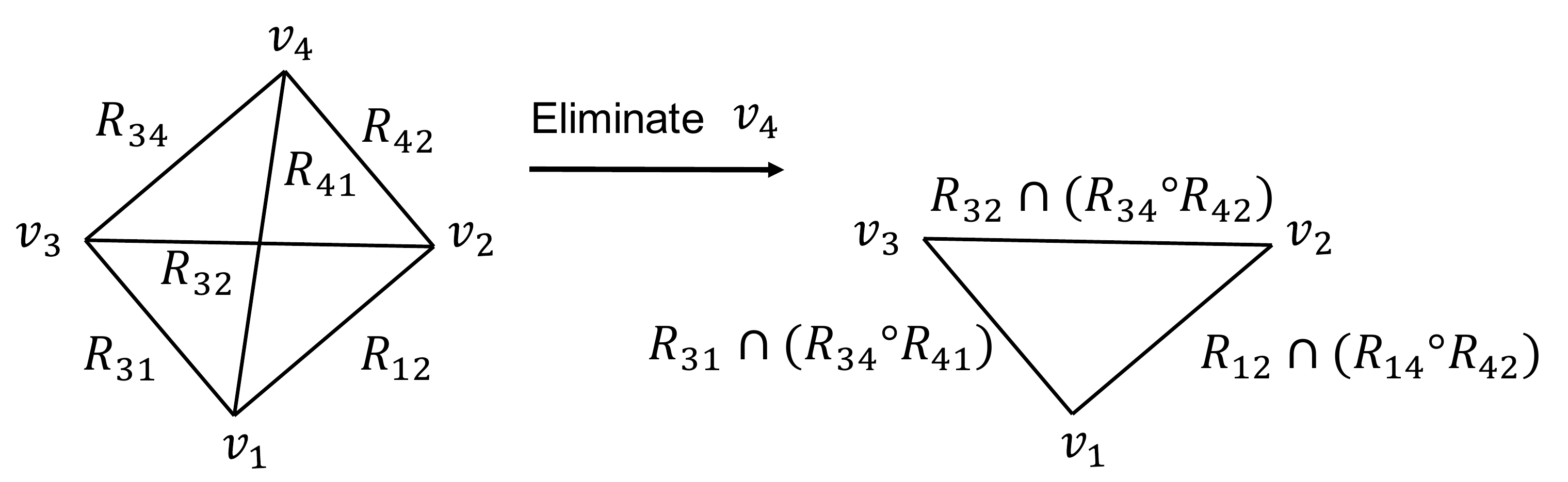}
\caption{{\black {Two binary} constraint \black {networks} $\mathcal{N}$ and $\mathcal{N}_{-1}$.}}  \label{fig:vep}
\end{figure}
%{We next} introduce the variable elimination property. 
\begin{definition}\label{dfn:vep}
A BCN $\mathcal{N}=\la V,{\D},C\ra$ is said to have the \emph{variable elimination property} (VEP), if, for any $v_x$ in $V$, {every} solution of $\mathcal{N}_{-x}$ can be extended to a solution of $\mathcal{N}$.  

$\mathcal{N}$ is said to have \black {\emph{weak}} VEP, if, for any $v_x$ in $V$ such that $v_x$ is AC relative to all relations in $E_x$, {every} solution of $\mathcal{N}_{-x}$ can be extended to a solution of $\mathcal{N}$. 

%\SL{I changed the definition by replacing  $\csp(\Gamma^+_p)$ with $\csp(\Gamma)$.}

A set of binary relations $\Gamma$ is said to have (weak) VEP if every BCN in $\csp(\Gamma)$ has (weak) VEP. Such a set of binary relations $\Gamma$ is also called a (weak) VEP class. 
\end{definition}

It is easy to see that, if a BCN (a set of binary relations) has VEP, then it also has weak VEP. The following example explains why we should take special care when eliminating variables {with} only one successor in Eq.~\eqref{eqq:1neighbor}.

%\magenta{\textbf{NOTE}:}\quad  there are many cases that the graph is not a chain in which we also need to take special care: The graph may contain such a chain as a subgraph.

\begin{figure}[htb]
  \begin{center}    
  \includegraphics[width=0.1\textwidth]{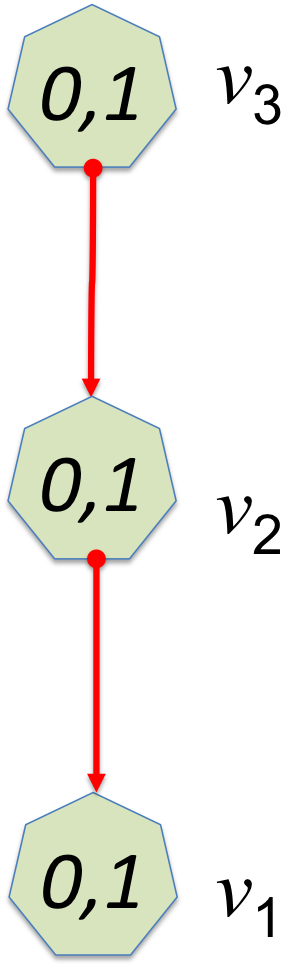}
  \end{center}
  \caption{A constraint graph that is a chain.}\label{fig:chain_constraint_graph}
\end{figure}

\begin{example}\label{ex:linear_csp}
Let $\mathcal{N}=\la V,{\D},C\ra$ be a BCN defined \black {by} $V=\{v_1,v_2,v_3\}$, $D_1=D_2=D_3=\{0,1\}$, \black {and} $C=\{((v_3,v_2), R),$ $((v_2,v_1), R)\}$ with $R=\{(1,0)\}$ (see Fig.~\ref{fig:chain_constraint_graph}). Suppose we do not have the operation specified in \eqref{eqq:1neighbor} and $\prec$ $=$ $(v_3,v_2,v_1)$ is the variable elimination ordering. Let $\mathcal{N}_{-3}$ be the restriction of $\mathcal{N}$ to $\{v_1,v_2\}$. Then $\mathcal{N}_{-3}$ has a unique solution $\sigma$ but it cannot be extended to a solution of $\mathcal{N}$.  % \qed
\end{example}

%{We have the following \magenta{result}.}

\begin{proposition} \label{proposition_VEP_DPCP}
Let $\Gamma$ be a complete set of binary relations {that is weakly closed under singletons}. Then \dpc decides $\csp(\Gamma)$  {iff} $\Gamma$ has VEP.
\end{proposition}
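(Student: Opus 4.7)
The plan is to handle the two directions separately: an inductive elimination argument via Corollary~\ref{coro:dpc} for $(\Leftarrow)$, and an explicit counterexample construction using weak closure under singletons for $(\Rightarrow)$.

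For $(\Leftarrow)$, I assume $\Gamma$ has VEP and verify condition~(ii) of Corollary~\ref{coro:dpc}. Let $\mathcal{N} \in \csp(\Gamma)$ be non-trivially inconsistent with triangulated constraint graph, $\prec^{-1}=(v_n,\ldots,v_1)$ a PEO of $G_{\mathcal{N}}$, and $\mathcal{N}$ strongly DPC relative to $\prec$. I show that $\mathcal{N}$ is consistent by induction on $|V|$. The base case $|V|=1$ is immediate. For the inductive step I form $\mathcal{N}_{-n}$ as in Definition~\ref{dfn:vep_network}. The key point is that strong DPC together with the PEO condition, which guarantees $R_{ij}\in C$ for every pair of neighbors of $v_n$, makes the elimination of $v_n$ inert: DAC forces $D_i\cap R_{ni}(D_n)=D_i$ in the $|E_n|=1$ branch, while DPC forces $R_{ij}\cap (R_{in}\circ R_{nj})=R_{ij}$ in the $|E_n|\ne 1$ branch. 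Hence $\mathcal{N}_{-n}$ coincides with $\mathcal{N}$ restricted to $V\setminus\{v_n\}$, inheriting non-trivial inconsistency, triangulation, the shortened PEO $(v_{n-1},\ldots,v_1)$, strong DPC relative to $(v_1,\ldots,v_{n-1})$, and membership in $\csp(\Gamma)$. The induction hypothesis then produces a solution of $\mathcal{N}_{-n}$, which VEP extends to a solution of $\mathcal{N}$.

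For $(\Rightarrow)$, I argue the contrapositive. Suppose $\Gamma$ lacks VEP, witnessed by some $\mathcal{N}\in\csp(\Gamma)$, variable $v_x$, and solution $\sigma$ of $\mathcal{N}_{-x}$ with $\sigma(v_i)=a_i$ that does not extend to $\mathcal{N}$. I construct an inconsistent $\mathcal{N}'\in\csp(\Gamma)$ and an ordering $\prec$ on which \dpc returns ``Consistent''. Let $\mathcal{N}'$ have the same variables as $\mathcal{N}$, domains $D'_i=\{a_i\}$ for $i\ne x$ and $D'_x=D_x$, and constraints $R'_{ij}=R_{ij}\cap(D'_i\times D'_j)$. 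Inconsistency of $\mathcal{N}'$ is immediate, since any solution would extend $\sigma$. To place $\mathcal{N}'\in\csp(\Gamma)$ I check each $R'_{ij}\in\Gamma$: for $i,j\ne x$, weak closure under singletons together with completeness yields $R'_{ij}=\{\langle a_i,a_j\rangle\}\in\Gamma$; for the boundary relations I express $R'_{ix}=R_{ix}\cap(\{a_i\}\times D_x)$ as $\pi_{1,3}\bigl(\tau_{1=2}(\{\langle a_i\rangle\}\times R_{ix})\bigr)$, where the unary singleton $\{\langle a_i\rangle\}$ arises by projection from some binary singleton $\{\langle a_i,c\rangle\}\in\Gamma$ (weak closure applied to any $\langle a_i,c\rangle\in R_{ix}$, whose existence is guaranteed because $\sigma$ solves $\mathcal{N}_{-x}$). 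Completeness then places $R'_{ix}$ in $\Gamma$. Finally, I run \dpc with a $\prec$ that places $v_x$ at position $n$, so $v_x$ is eliminated first. The fact that $\sigma$ solves $\mathcal{N}_{-x}$ ensures both that each $R'_{ix}$ contains some $\langle a_i,c\rangle$ (no AC failure) and that $\langle a_i,a_j\rangle\in R'_{ix}\circ R'_{xj}$ for every pair of neighbors of $v_x$ (no PC failure), so eliminating $v_x$ empties no domain or constraint. The residual network on $V\setminus\{v_x\}$ has singleton domains and mutually consistent singleton constraints, so subsequent iterations of \dpc merely propagate singletons and never fire the ``Inconsistent'' exit; \dpc therefore reports ``Consistent'' on the inconsistent $\mathcal{N}'$, contradicting that \dpc decides $\csp(\Gamma)$.

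The principal obstacle is the verification $R'_{ix}\in\Gamma$; this is the only step where the hypothesis that $\Gamma$ is weakly closed under singletons is genuinely needed, in combination with completeness and the operations of Definition~\ref{dfn:definable} to manufacture the slice of $R_{ix}$ at $a_i$. The remaining bookkeeping---strong DPC and PEO preservation in the induction, and singleton propagation during the elimination of $v_x$---is routine.
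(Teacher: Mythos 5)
Your proof is correct and follows essentially the same route as the paper's: the forward direction rests on the same key observation that under strong DPC and the PEO condition the elimination of the last variable is inert (so the eliminated network coincides with the restriction, and VEP extends solutions), and the converse direction uses the same singleton-domain gadget network built from weak closure under singletons, merely phrased contrapositively and with the boundary relations $R_{ix}$ sliced at $a_i$ rather than left intact. These are only cosmetic reorganizations of the paper's argument.
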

\begin{proof}
We address the `if' part first. 
Assume that $\Gamma$ has VEP, and let $\mathcal{N}=\la V,{\D},C\ra$ be a network in $\csp(\Gamma)$ that is non-trivially inconsistent and strongly DPC relative to $\prec=(v_1,\ldots,v_n)$, where $G_{\N}$ is triangulated and $\prec^{-1}$ is a PEO of it. We show that $\mathcal{N}$ is consistent. Let $V_i=\{v_1,\ldots,v_{i}\}$ and $\mathcal{N}{\mid_{V_i}}$ be the restriction of $\mathcal{N}$ to $V_i$. We claim that $\mathcal{N}{\mid_{V_i}}$ is consistent for $k=1,\ldots,n$ and prove the claim by induction on $k$. First, since $\mathcal{N}$ is strongly DPC relative to $\prec$,
$D_1$ is not empty and there is an $a_1\in D_1$. 
Then, $\mathcal{N}{\mid_{V_1}}$ is consistent and has a solution $\sigma_1=\la a_1\ra$. Further, suppose that $\mathcal{N}{\mid_{V_i}}$ is consistent and $\sigma_i =\la a_1,a_2,...,a_i\ra$ is a solution of $\mathcal{N}{\mid_{V_i}}$. We show that $\sigma_i$ can be extended to a solution $\sigma_{i+1}=\la a_1,\ldots,a_i,a_{i+1} \ra$ of $\mathcal{N}{\mid_{V_{i+1}}}$. We consider three subcases: (i) If $E_{i+1}$ is empty, then for any $a_{i+1} \in D_{i+1}$, $\sigma_{i+1} =\la a_1,a_2,...,a_i,a_{i+1}\ra$ is a solution of $\mathcal{N}{\mid_{V_{i+1}}}$ because there is no constraint between $v_i$ and $v_{i+1}$. (ii) If {$E_{i+1}=\{R_{j,i+1}\}$} is a singleton, then, since $\mathcal{N}$ is DAC relative to  $\prec$, we have $a_{i+1}\in D_{i+1}$ such that $\la a_i,a_{i+1}\ra \in R_{i,i+1}$ and $\sigma_{i+1} =\la a_1,a_2,...,a_i,a_{i+1}\ra$ is a solution of $\mathcal{N}{\mid_{V_{i+1}}}$. 
(iii) If $E_{i+1}$ contains more than one variable, for every pair of distinct variables $(v_x,v_y)$ in $V_i$ with $R_{x,i+1}, R_{y,i+1} \in E_{i+1}$, we know $R_{xy}\in C$ because  {$\prec^{-1}$ is a PEO of $G_\mathcal{N}$}. Moreover, since $(v_x,v_y)$ is PC relative to $v_{i+1}$, we have $R_{xy} \subseteq R_{x,i+1}\circ R_{i+1,y}$. Then since $\Gamma$ has VEP and $\mathcal{N}{\mid_{V_i}}$ is indeed the same network as the one obtained by eliminating $v_{i+1}$ from $\mathcal{N}{\mid_{V_{i+1}}}$,  by {Definition}~\ref{dfn:vep}, $\sigma_i$ can be extended to a solution $\sigma_{i+1}$ of $\mathcal{N}{\mid_{V_{i+1}}}$. Thus, $\mathcal{N}$ is consistent. By {Corollary}~\ref{coro:dpc}, \dpc decides $\csp(\Gamma)$.

Next, we address the `only if' part. Assume that \dpc decides $\csp(\Gamma)$. We show that $\Gamma$ has VEP. Let $\mathcal{N}=\la V,{\D},C\ra$ be a non-trivially inconsistent network in $\csp(\Gamma)$. Given $v_x\in V$, we show that every solution of $\mathcal{N}_{-x}$ can be extended to $\mathcal{N}$. Without loss of generality, we assume that $x=n$. Let $\sigma=\langle a_1,\ldots,a_{n-1} \rangle$ be a solution of $\mathcal{N}_{-n}$, and $E_n=\{R_{in}\mid R_{in}\in C\}$. By the definition of $\N_{-n}$, for any $R_{in},R_{jn}\in E_n(i\ne j)$, we have $\la a_i,a_j\ra\in R_{in}\circ R_{nj} \cap R_{ij}$. We then construct a new problem $\mathcal{N}'=\la V,{\D'},C' \ra$ in $\csp(\Gamma)$, where  {${\D'}=\{D'_1,...,D'_{n-1}, D_n\}$ with $D'_i=\{a_i\}$ for $1\leq i<n$ and} $C'=\{\{\langle a_i,a_j \rangle\} \mid 1\leq i\not=j <n\} \cup E_n$. Clearly, $\sigma$ is also a solution of $\mathcal{N}'_{-n}$ and $\mathcal{N}'_{-n}$ is strongly PC and, hence, strongly DPC relative to the ordering $(v_1,\ldots,v_{n-1})$. Further, since $\la a_i,a_j\ra\in R_{in}\circ R_{nj} \cap R_{ij}$ for any $R_{in},R_{jn}\in E_n(i\ne j)$, we have that $\N'$ is strong DPC relative to $\prec$ $=$ $(v_1,\ldots,v_n)$. As $G_{\N'_{-n}}$ is complete, $G_{\N'}$ is triangulated with $\prec^{-1}$ being a PEO of it. As \dpc decides $\csp(\Gamma)$ and $\mathcal{N}'\in \csp(\Gamma)$, by {Corollary}~\ref{coro:dpc}, $\mathcal{N}'$ is consistent and has a solution \black {that extends $\sigma$ and is also a solution of $\mathcal{N}$}. This shows that $\mathcal{N}$ is consistent and, hence, $\Gamma$ has VEP.\qed
%Thus, by the definition of $\mathcal{N}'_{-n}$, $\mathcal{N}'$ is triangulated, with \magenta{$\prec$ $=$ $(v_n,v_{n-1},\ldots, v_1)$} being {a PEO of $G_{\mathcal{N}'}$, and strongly DPC relative to} $\prec^{-1}$. In fact, for any $i,j\not=n$, we know that $(a_i,a_j)\in R_{in}\circ R_{nj}$, which justifies the path-consistency of $(v_i,v_j)$ relative to $v_n$ in $\mathcal{N}'$.  Since \dpc decides \magenta{$\csp(\Gamma)$} and $\mathcal{N}'\in \magenta{\csp(\Gamma)}$, by {Corollary}~\ref{coro:dpc}, $\mathcal{N}'$ is consistent and has a solution \black {that extends $\sigma$ and is also a solution of $\mathcal{N}$}. This shows that $\mathcal{N}$ is consistent and, hence, $\Gamma$ has VEP. %\qed~\hfill\qed
\end{proof}

Therefore, if $\mathcal{N}=\la V,{\D},C\ra$ is defined over a complete VEP class,  then \dpc can decide it. Note that in the above proposition we require $\Gamma$ to be complete. This is important; for example,  every \emph{row-convex constraint} \cite{van1995minimality} network has VEP (cf. the proof of \cite[Theorem~1]{zhang2009solving}) and, hence, the class of row-convex constraints has VEP. However, \dpc does not decide the consistency problem over the row-convex constraint class because it was shown to be NP-hard (cf. e.g.~\cite{KongLLL15}).

%Note that in the definition of VEP classes, we require that all BCNs defined over $\Gamma^+$ {have} VEP. This is important\black {; for} example, {although every row-convex constraint network has VEP (see the proof of \cite[Theorem~1]{zhang2009solving})}, the class of row-convex constraints has neither VEP nor weak VEP {as it is NP-hard to decide the consistency of row-convex constraint networks (see e.g.~\cite{KongLLL15})}. 

VEP is closely related to the Helly property, defined as follows.

\begin{definition}\label{dfn:helly}
A set $\Gamma$ of binary relations over $\D=\Set{D_1,...,D_n}$ is said to have  the \emph{Helly property} if for any $k>2$ binary relations, $R_i \subseteq D_{u_i}\times D_{u_0}(1\le i\le k, 1\le u_i\ne u_0\le n)$, in $\Gamma$, and for any $k$ values, $a_i \in D_{u_i}(1\leq i\leq k)$, such that $R_i(a_i)=\Set{b \in D_{u_0} \mid \la a_i,b\ra \in R_i}$ is nonempty, we have $\bigcap_{i=1}^k R_i(a_i)\not=\varnothing$ iff $R_{i}(a_i) \cap R_j(a_j) \not=\varnothing$ for any $1\leq i\not=j\leq k$.
\end{definition}

\begin{figure}[htb]
  \begin{center}    
  \includegraphics[width=0.3\textwidth]{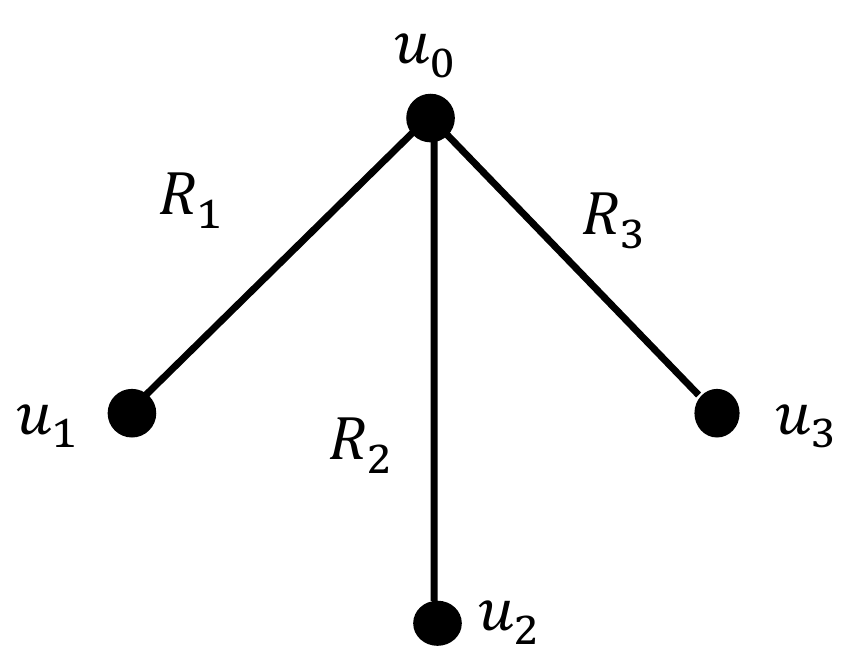}
  \end{center}
  \caption{An illustration of Example~\ref{example:Helly}.}\label{fig:Helly}
\end{figure}

\begin{example}\label{example:Helly}
Let $D_{u_0} = \{a,b,c,d\}, D_{u_1}=\{e\}, D_{u_2}=\{f\}, D_{u_3}=\{g\}$ and $R_1=\{\la e,a \ra, \la e,b \ra, \la e,c \ra\}, R_2=\{\la f,b \ra, \la f,c \ra, \la f,d \ra\}, R_3=\{\la g,c \ra, \la g,d \ra, \la g,a \ra\}$. See Fig.~\ref{fig:Helly} for an illustration. Then $\Gamma=\{R_1,R_2,R_3\}$ over $\D=\{D_{u_0},D_{u_1},D_{u_2},D_{u_3}\}$ has the Helly property.
\end{example}

%VEP is indeed equivalent to the Helly property.

\begin{theorem}\label{prop:helly+}
A set of binary relations $\Gamma$ has VEP iff it has the Helly property.
\end{theorem}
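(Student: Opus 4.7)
The plan is to exploit a direct correspondence between the two notions: when extending a solution $\sigma$ of $\mathcal{N}_{-x}$ to a value of $v_x$, the admissible values form the set $\bigcap_{R_{ix}\in E_x} R_{ix}(a_i)$, which is precisely a family of slices of the kind the Helly property speaks about. Conversely, any configuration appearing in the Helly property can be realized as the slices produced by eliminating the central variable of a star-shaped BCN. So both implications become almost immediate once this dictionary is in place.

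For the ``if'' direction, assume $\Gamma$ has the Helly property, let $\mathcal{N}=\la V,\D,C\ra$ be a BCN in $\csp(\Gamma)$, pick $v_x\in V$, and let $\sigma=\la a_1,\ldots,\hat a_x,\ldots,a_n\ra$ be a solution of $\mathcal{N}_{-x}$. If $|E_x|=0$ (and $D_x\ne\varnothing$), any $a_x\in D_x$ extends $\sigma$. If $|E_x|=1$, then Eq.~\eqref{eqq:1neighbor} has shrunk $D_i$ to $D_i\cap R_{xi}(D_x)$, so $a_i$ being admissible directly supplies a witness $a_x\in R_{ix}(a_i)$. If $|E_x|=k\ge 2$, the pairwise constraint $R_{ij}'=R_{ix}\circ R_{xj}\cap R_{ij}$ satisfied by $\sigma$ translates into $R_{ix}(a_i)\cap R_{jx}(a_j)\ne\varnothing$ for every pair of neighbors of $v_x$; for $k=2$ this directly yields $a_x$, and for $k>2$ the Helly property gives $a_x\in\bigcap_i R_{ix}(a_i)$.

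For the ``only if'' direction, assume $\Gamma$ has VEP and let $R_1,\ldots,R_k\in\Gamma$ with $R_i\subseteq D_{u_i}\times D_{u_0}$, $k>2$, together with values $a_i\in D_{u_i}$ satisfying the hypotheses of the Helly property. Build a star-shaped BCN $\mathcal{N}$ with central variable $v_0$ of domain $D_{u_0}$, $k$ leaves $v_1,\ldots,v_k$ of singleton domains $\{a_i\}$, and edges labeled $R_{i0}=R_i$; since the only explicit constraints are the $R_i$'s, $\mathcal{N}\in\csp(\Gamma)$. Eliminating $v_0$ produces a clique on $v_1,\ldots,v_k$ whose new constraints are $R_{ij}'=R_{i0}\circ R_{0j}$, and $\sigma=\la a_1,\ldots,a_k\ra$ is a solution of $\mathcal{N}_{-0}$ exactly because each $R_i(a_i)\cap R_j(a_j)\ne\varnothing$. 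VEP then extends $\sigma$ to some $a_0\in D_{u_0}$ that satisfies every $R_i$, \emph{i.e.} $a_0\in\bigcap_i R_i(a_i)$, so this intersection is nonempty.

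The main obstacle is keeping the bookkeeping consistent across the two definitions: the $|E_x|=1$ case in the first direction is genuinely different from the generic composition-based case because of the special domain update rule in Eq.~\eqref{eqq:1neighbor}, and the degenerate cases $|E_x|=0$ and possibly empty $D_x$ must be disposed of cleanly. In the other direction one must verify that the star-shaped construction really lives in $\csp(\Gamma)$, i.e. that the unary pinnings ``$v_i=a_i$'' are absorbed into the domains rather than added as non-$\Gamma$ relations, and that the definition of $\mathcal{N}_{-0}$ is being applied in the $|E_0|\ne 1$ branch (which holds since $k>2$).
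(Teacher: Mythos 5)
Your proposal is correct and follows essentially the same route as the paper's proof: the same star-shaped network with central variable $v_0$ (the paper's $v_{k+1}$) for the VEP $\Rightarrow$ Helly direction, and the same case split on $|E_x|=0,1,\ge 2$ with the pairwise-to-global step supplied by the Helly property for the converse. The only deviation is your use of singleton leaf domains $\{a_i\}$, which is unnecessary and sits awkwardly with the requirement that a constraint relation be a subset of the product of its variables' domains; the paper instead keeps the full domains $D_{u_i}$ and simply observes that $\la a_1,\ldots,a_k\ra$ is a solution of $\mathcal{N}_{-(k+1)}$ because each pair $\la a_i,a_j\ra$ lies in $R_{i,k+1}\circ R_{k+1,j}$.
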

%\vspace*{-2mm}
\begin{proof}
Suppose $\Gamma$ has VEP, we show $\Gamma$ has the Helly property. Let ${\D}=\{D_1,...,D_n\}$ be the set of domains related to relations in $\Gamma$. Suppose 
$R_i\subseteq D_{u_i}\times D_{u_0}$ ($1\leq i\leq k$, $1\le u_i\ne u_0 \le n$) are $k>2$ binary relations in $\Gamma$ and $a_i\in D_{u_i}$ ($1\leq i\leq k$) are values such that $\varnothing\not=R_i(a_i) \subseteq D_{u_0}$. Suppose $R_i(a_i)\cap R_j(a_j)$ is nonempty for any $i,j$ with $1\leq i\not=j\leq k$. We show that $\bigcap_{i=1}^k R_i(a_i)$ is nonempty. To this end, we construct a BCN $\mathcal{N}=\la V, {{\D}'},  C\ra$ over $\Gamma$ with $V=\{v_1,...,v_k,v_{k+1}\}$, ${\D'}=\{D_{u_1}, ..., D_{u_k}, D_{u_0}\}$, and $C=\{R_{i,k+1}\mid 1\leq i\leq k\}$, where $R_{i,k+1}=R_i$. Consider $\mathcal{N}_{-(k+1)}$. As $R_i(a_i)\cap R_j(a_j)\not=\varnothing$, we have $\la a_i,a_j\ra\in R_{i,k+1}\circ R_{k+1,j}$. This shows that $\sigma=\la a_1,\ldots,a_k\ra$ is a solution of $\mathcal{N}_{-(k+1)}$. {Since} $\Gamma$ and, hence, $\mathcal{N}$ have VEP, $\mathcal{N}$ has a solution that extends $\sigma$. Hence there exists $a\in {D_{u_0}}$ such that $a\in R_{i,k+1}(a_i)$ for every $1\leq i\leq k$. Thus $\bigcap_{i=1}^k R_{i}(a_i)\not=\varnothing$. This proves that $\Gamma$ has the Helly property.

Suppose $\Gamma$ has the Helly property, we show $\Gamma$ has VEP. Let $\mathcal{N}=\la V,{\D},C\ra$ be a non-trivially inconsistent BCN defined over $\Gamma$ with $V=\{v_1,v_2,...,v_n\}$ and $C$ is a set of binary constraints $((v_i,v_j),R)$  with $R\in \Gamma$. Let $E_n=\{R_{in}\mid R_{in}\in C\}$. Assume $\sigma=\la a_1,a_2,...,a_{n-1}\ra$ is a solution of, say, $\mathcal{N}_{-n}$. We show that there exists $a_n\in D_n$ such that $\la a_1,...,a_{n-1}, a_n\ra$ is a solution of   $\mathcal{N}$. If $E_n$ is empty, we can take any $a_n$ from $D_n$ which is nonempty since $\mathcal{N}$ is non-trivially inconsistent; if $E_n$ contains only one constraint, say, $((v_i,v_n),R_{in})$, by $a_{i}\in D'_i=D_i\cap R_{ni}(D_n)$, there exists $a_n\in D_n$ such that $\la a_i,a_n\ra\in R_{in}$; if $E_n$ contains $k\geq 2$ constraints and let them be $((v_{u_i},v_n), R_{u_in})$ $(1\leq i \leq k)$,  we have $\la a_i,a_j\ra\in R_{u_i u_j}\cap (R_{u_i n}\circ R_{n u_j})$ for $1\leq i\not=j \leq k$. Therefore, we have $R_{u_i n}(a_i)\cap R_{u_j n}(a_j)\not=\varnothing$ for $1\le i\not=j \le k$. By the Helly property of $\Gamma$,  we have $\bigcap_{i=1}^k R_{u_i n}(a_i) \ne \varnothing$. So we can take any $a_n\in \bigcap_{i=1}^k R_{u_i n}(a_i)$ so that $\la a_1,\ldots,a_{n-1},a_n \ra$ is a solution of $\mathcal{N}$. Therefore, $\Gamma$ has VEP.
~\hfill\qed
\end{proof}

The class of row-convex constraints \cite{van1995minimality} and the class of \emph{tree-convex} constraints \cite{ZhangY03} have the Helly property and, thus, they have VEP by Theorem~\ref{prop:helly+}.

%\magenta{We have the following nice result for constraint classes that have the Helly property.}

\begin{proposition}\label{HellyPC}
Suppose $\Gamma$ is a set of binary relations that has the Helly property. Let $\N \in \csp(\Gamma)$. Suppose $\N$ is non-trivially inconsistent and $G_{\N}$ is triangulated with $\prec^{-1}=(v_n,\ldots,v_1)$ as a PEO of it. Then $\N$ is consistent if it is strongly DPC relative to $\prec$.
\end{proposition}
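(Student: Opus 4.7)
My plan is to mirror the inductive construction from the ``if'' direction of Proposition~\ref{proposition_VEP_DPCP}, appealing directly to the Helly property in place of VEP so that the completeness and weakly-closed-under-singletons hypotheses are not required. Writing $V_i=\{v_1,\ldots,v_i\}$ and $\N|_{V_i}$ for the restriction of $\N$ to $V_i$, I aim to show by induction on $i$ that $\N|_{V_i}$ admits a solution $\sigma_i=\langle a_1,\ldots,a_i\rangle$; taking $i=n$ then yields consistency of $\N$.

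For the base case $i=1$, non-trivial inconsistency forces $D_1\neq\varnothing$, so any $a_1\in D_1$ works. For the inductive step, given a solution $\sigma_i$ of $\N|_{V_i}$, set $N=\{j\le i\mid R_{j,i+1}\in C\}$. Because $\prec^{-1}=(v_n,\ldots,v_1)$ is a PEO of $G_\N$ and $v_{i+1}$ precedes every $v_j$ with $j\le i$ in $\prec^{-1}$, the set $\{v_j\mid j\in N\}$ induces a clique of $G_\N$, so $R_{j_p j_q}\in C$ for all distinct $j_p,j_q\in N$.

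I then split on $|N|$. If $|N|=0$, nonemptiness of $D_{i+1}$ suffices. If $|N|=1$, say $N=\{j\}$, DAC relative to $\prec$ produces an $a_{i+1}\in D_{i+1}$ with $\langle a_j,a_{i+1}\rangle\in R_{j,i+1}$. If $|N|=2$, say $N=\{j_1,j_2\}$, DPC combined with $R_{j_1 j_2}\in C$ yields $R_{j_1 j_2}\subseteq R_{j_1,i+1}\circ R_{i+1,j_2}$; applying this to $\langle a_{j_1},a_{j_2}\rangle\in R_{j_1 j_2}$ produces the required $a_{i+1}$. If $|N|\ge 3$, DAC makes every image $R_{j,i+1}(a_j)$ nonempty, and the same PEO/DPC argument applied pairwise shows that $R_{j_p,i+1}(a_{j_p})\cap R_{j_q,i+1}(a_{j_q})\neq\varnothing$ for all distinct $j_p,j_q\in N$; the Helly property of $\Gamma$ then provides some $a_{i+1}\in\bigcap_{j\in N}R_{j,i+1}(a_j)$.

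The main subtlety is the bookkeeping in the $|N|\ge 3$ case: one must verify that the Helly hypotheses are met exactly---namely, that each single image $R_{j,i+1}(a_j)$ is nonempty (ensured by DAC) and that every pairwise intersection $R_{j_p,i+1}(a_{j_p})\cap R_{j_q,i+1}(a_{j_q})$ is nonempty (ensured by strong DPC together with the PEO-supplied mediating constraints $R_{j_p j_q}$, since $\langle a_{j_p},a_{j_q}\rangle\in R_{j_p j_q}\subseteq R_{j_p,i+1}\circ R_{i+1,j_q}$). Once this extension step is settled, the induction closes and Proposition~\ref{HellyPC} follows.
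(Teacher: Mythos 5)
Your proposal is correct and follows essentially the same route as the paper's proof: induction along $\prec$, using the PEO to guarantee the mediating constraints $R_{j_p j_q}$, strong DPC to obtain nonempty pairwise intersections $R_{j_p,i+1}(a_{j_p})\cap R_{j_q,i+1}(a_{j_q})$, and the Helly property to extract a common value for $v_{i+1}$. Your explicit case split on $|N|\in\{0,1,2\}$ is a welcome refinement, since the Helly property as stated (Definition~\ref{dfn:helly}) only applies to $k>2$ relations and the paper's proof leaves those small cases implicit.
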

\begin{proof}
%%%% The proof requires DAC, which is not directly available.
%% Suppose the above condition (ii) holds. Similar to the proof of Thm.~\ref{prop:helly+}, we suppose $S_i$ ($1\leq i\leq k$) are $k>2$ binary relations in $\Gamma$ and $a_i$ ($1\leq i\leq k$) are values such that $\varphi\not=S_i(a_i)$ for every $1\leq i\leq k$. Suppose these $S_i(a_i)$ are pairwise disjoint. WE show that there intersection is nonempty. To this end, we   and 

Let $\N \in \csp(\Gamma)$. Suppose $\N = \la V,\D,C \ra$ is non-trivially inconsistent and $G_{\N}$ is triangulated with $\prec^{-1}$ $=$ $(v_n,\ldots,v_1)$ being a PEO of it. Suppose $\N$ is strongly DPC relative to $\prec$. We show that $\N$ is consistent. Let $V_k=\{v_1,\ldots,v_k\}$ and $\N_k$ be the restriction of $\N$ to $V_k$. Since $\N$ is non-trivially inconsistent, we have that $\N_{1}$ is consistent. Suppose $\N_k$ is consistent, we show $\N_{k+1}$ is consistent. Let $\sigma=\la a_1,\ldots,a_k \ra$ be a solution of $\N_k$. Let $E_{k+1}=\{R_{i,k+1} \mid R_{i,k+1} \in C, i \le k \}$. Since $G_{\N}$ is triangulated and $\prec^{-1}=(v_n,\ldots,v_1)$ is a PEO of it, for any two different constraints $R_{i,k+1}, R_{j,k+1}\in E_{k+1}$, we have $R_{ij}\in C$. Further, since $\N$ is strongly DPC relative to $\prec$, we have $\la a_i,a_j \ra \in R_{i,k+1}\circ R_{k+1,j} \cap R_{ij}$. Thus, we have $R_{i,k+1}(a_i) \cap R_{j,k+1}(a_j) \ne \varnothing$ for any two different constraints $R_{i,k+1}, R_{j,k+1}\in E_{k+1}$. Since $\Gamma$ has the Helly property, we have $\bigcap_{R_{i,k+1}\in E_{k+1}} R_{i,k+1}(a_i) \ne \varnothing$. Therefore, $\sigma$ can be extended to a solution of $\N_{k+1}$ and $\N_{k+1}$ is consistent. By induction on $k$, we have that $\N$ is consistent.
~\hfill\qed
\end{proof}

%In the next section, we provide a characterization for weak VEP classes, where we shall also see by an example that VEP is strictly stronger that weak VEP.

%\vspace*{-2mm}

\section{Majority-Closed Constraint Languages}\label{sec:majority-closed}

In this section we {characterize} {weak VEP classes}. We will show that a complete set of binary relations $\Gamma$ has weak VEP iff all relations in $\Gamma$ are closed under a \emph{majority} operation, which is defined as follows.

\begin{definition} \black {\cite{BulatovJ03}}\label{def_closure}
Let $\D=\{D_1,\ldots,D_n\}$ be a set of domains. A \emph{multi-sorted majority operation} $\varphi$ \black {on $\D$} is a set $\{\varphi_1,\ldots,\varphi_n\}$, where $\varphi_i$ is a one-sorted majority operation on $D_i$, i.e.,  a mapping from $D_i^3$ to $D_i$ such that $\varphi_i(e,d,d)=\varphi_i(d,e,d)=\varphi_i(d,d,e)=d$ for all $d,e$ in $D_i$. 

An $m$-ary relation $R\subseteq D_{i_1}\times...\times D_{i_m}$ with $i_1,...,i_m\in \Set{1,2,...,n}$ is said to be \emph{closed under $\varphi$} if  $\varphi(t_1,t_2,t_3)=\langle \varphi_{i_1}(t_1[1], t_2[1],t_3[1]), \ldots, \varphi_{i_m}(t_1[m],t_2[m], t_3[m]) \rangle$ is in $R$ for any $t_1,t_2,t_3 \in R$.  

A set of relations $\Gamma$ is said to be \emph{closed under $\varphi$} if every $R \in \Gamma$ is closed under $\varphi$.

\end{definition}
A set of relations $\Gamma$ is called a \emph{majority-closed} language if there exists a (multi-sorted) majority operation $\varphi$ such that every relation in $\Gamma$ is closed under $\varphi$. 

\subsection{Tree-Preserving Constraints}\label{subsec:majority}
%\blue{COMMENT: Maybe explain here why use the term "solid"}
%Several important \magenta{constraint classes} have been found to be majority-closed. The most well-known one is the \magenta{class} of \emph{connected row-convex (CRC)} constraints \cite{DevilleBH99}, which is further generalized to a larger \magenta{class} of \emph{tree-preserving} constraints \cite{KongLLL15}. 
%The \magenta{class} of CRC constraints has been successfully applied to temporal reasoning \cite{DBLP:conf/aips/Kumar05}, logical filtering \cite{DBLP:conf/aips/KumarR06} and geometric reasoning \cite{kumar_geometric_2004}, and the \magenta{class} of tree-preserving constraints can model a large subclass of the scene labeling problem \cite{KongLLL15}.

%\magenta{\textbf{NOTE}:}\quad  the above annotated sentences have appeared in the introduction.

The class of tree-preserving constraints is majority-closed.

\begin{definition}\cite{Kong2017}
An undirected graph structure can often be associated to a finite domain $D$ such that there is a bijection between the vertices in the graph and the values in $D$. If the graph is connected and acyclic, i.e. a tree, then we say it is a \emph{tree domain} of $x$, denote as $T=(D,E)$ where $E$ is a set of edges. A constraint $R_{ij}$ over tree domains $T_i=(D_i,E_i)$ and $T_j=(D_j,E_j)$ is called \emph{tree-preserving} if the image of every subtree in $T_i$ is a subtree in $T_j$.
\end{definition}

%\magenta{\textbf{NOTE}:}\quad Here we should introduce the standard majority operation for tree-preserving constraints and the main result that establishes the connection between tree-preserving constraints and those closed under the standard majority operation. It is not necessary nor wise to cut and paste to here the application in the scene labeling problem from another paper.

An example of tree-preserving constraint is shown in Fig.~\ref{fig:tree}. A CRC constraint is a special tree-preserving constraint where related tree domains are chains \cite{Kong2017}.

\begin{figure}[htb]
  \begin{center}    
  \includegraphics[width=0.3\textwidth]{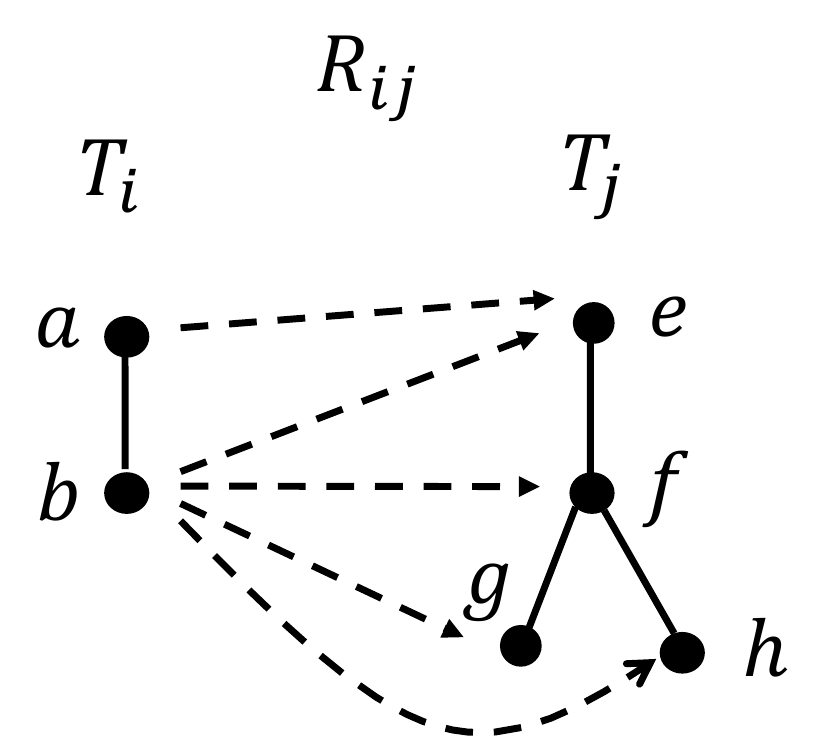}
  \end{center}
  \caption{An example of tree-preserving constraint $R_{ij}$ with $D_i=\{a,b\}$ and $D_j=\{e,f,g,h\}$, where a dashed arrow from a node $u$ in $T_i$ to a node $v$ in $T_j$ indicates that $\la u,v \ra\in R_{ij}$.}\label{fig:tree}
\end{figure}

\begin{definition}
Let $T_i$ be a nonempty tree domain for a variable $v_i$. The \emph{standard} majority operation $m_i$ on $T_i$ is defined as:

$$(\forall a,b,c\in T_i) \text{\quad} m_i(a,b,c)=\pi_{a,b}\cap \pi_{b,c}\cap \pi_{a,c}, $$
where $a,b,c$ are not necessarily distinct and $\pi_{u,v}$ denotes the unique path from $u$ to $v$ in $T_i$.

The following result establishes the connection between tree-preserving constraints and majority-closed constraints.

\begin{theorem}\cite{Kong2017}
Let $T_i$ and $T_j$ be two nonempty tree domains and $m_i$ and $m_j$ their standard majority operations. Suppose $R_{ij} \subseteq T_i \times T_j$ is a nonempty constraint such that both $R_{ij}$ and $R_{ji}$ are arc-consistent. Then $R_{ij}$ is closed under $\{m_i,m_j\}$ iff both $R_{ij}$ and $R_{ji}$ are tree-preserving w.r.t. $T_i$ and $T_j$.
\end{theorem}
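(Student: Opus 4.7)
The plan is to establish the biconditional by proving each direction separately, with the forward direction (majority-closed implies tree-preserving) being relatively clean and the reverse direction requiring more delicate geometric reasoning on the tree.

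For the forward direction, I would fix an arbitrary subtree $S \subseteq T_i$ and show that $R_{ij}(S) = \{b \in T_j : \exists a \in S, \la a,b \ra \in R_{ij}\}$ is itself a subtree of $T_j$. Given two values $b_1, b_2 \in R_{ij}(S)$ with witnesses $a_1, a_2 \in S$, and any intermediate $b$ on the path $\pi_{b_1, b_2}$, I would use arc-consistency of $R_{ji}$ to produce some $a^* \in T_i$ with $\la a^*, b \ra \in R_{ij}$. Applying the majority componentwise to the tuples $\la a_1, b_1 \ra$, $\la a^*, b \ra$, $\la a_2, b_2 \ra$ yields $\la m_i(a_1, a^*, a_2), m_j(b_1, b, b_2) \ra \in R_{ij}$. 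The second coordinate collapses to $b$ because $b$ already lies on $\pi_{b_1, b_2}$, and the first coordinate lies on $\pi_{a_1, a_2} \subseteq S$ since $S$ is a subtree. Thus $b \in R_{ij}(S)$, establishing connectivity. The same argument applied to $R_{ji}$ (using arc-consistency of $R_{ij}$) handles the other direction of tree-preservation.

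For the reverse direction, I would fix three tuples $\la a_k, b_k \ra \in R_{ij}$ for $k=1,2,3$, let $a = m_i(a_1, a_2, a_3)$ and $b = m_j(b_1, b_2, b_3)$, and argue that $\la a, b \ra \in R_{ij}$ by showing $a \in R_{ji}(b)$. Since $\{b\}$ is a trivial subtree of $T_j$ and $R_{ji}$ is tree-preserving, $R_{ji}(b)$ is a subtree of $T_i$. For each pair $(k,l)$, the image $R_{ij}(\pi_{a_k, a_l})$ is a subtree of $T_j$ containing both $b_k$ and $b_l$, hence containing $\pi_{b_k, b_l}$, hence containing $b$ (since $b$ is the median of $b_1, b_2, b_3$). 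This produces three witnesses $a^{kl} \in \pi_{a_k, a_l}$ with $\la a^{kl}, b \ra \in R_{ij}$, placing $a^{12}, a^{13}, a^{23}$ inside the subtree $R_{ji}(b)$. It then suffices to show that the path between two suitably chosen $a^{kl}$'s forces $a$ into this subtree. I split on whether $a$ coincides with one of the $a_k$'s: in the collinear case, say $a = a_3$, the witnesses $a^{13}$ and $a^{23}$ lie on opposite sides of $a_3$ along $\pi_{a_1, a_2}$, so the path between them passes through $a_3 = a$. In the Y-shaped case, the three arms $\pi_{a, a_k}$ are edge-disjoint, and each $a^{kl}$ lies on one of the two arms incident to $a_k$ or $a_l$; a short case analysis shows that at least two of the three witnesses inhabit different arms, so the path joining them in $T_i$ must pass through the branch point $a$. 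In either subcase, $a \in R_{ji}(b)$.

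The main obstacle will be the reverse direction, specifically the geometric case analysis to place $a$ inside $R_{ji}(b)$. The forward direction is essentially a one-line application of arc-consistency plus the componentwise behavior of the majority on paths, but the reverse direction genuinely uses the interaction between tree medians, subtree connectivity, and the Helly-like structure of paths in a tree, and the bookkeeping in the Y-shape argument (as well as handling the degenerate collinear configuration) is where care is needed.
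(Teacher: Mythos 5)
The paper states this theorem as an imported result from the cited reference and gives no proof of its own, so there is no in-paper argument to compare against; judged on its own terms, your proof is correct. The forward direction is exactly the standard median argument (arc-consistency of $R_{ji}$ supplies the middle tuple $\la a^*,b\ra$, and the median collapses to $b$ in the second coordinate while landing in $\pi_{a_1,a_2}\subseteq S$ in the first), and your reverse direction is sound: each witness $a^{kl}$ lies on one of the two arms $\pi_{a,a_k}$, $\pi_{a,a_l}$ of the tripod rooted at the median $a$, and since the three index pairs $\{1,2\},\{1,3\},\{2,3\}$ have empty common intersection, two witnesses must sit on distinct arms, forcing the path between them --- contained in the subtree $R_{ji}(b)$ --- to pass through $a$. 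Note that this pigeonhole observation already covers your ``collinear'' subcase uniformly (a degenerate arm is just $\{a\}$, and a witness equal to $a$ finishes immediately), so the explicit case split is unnecessary; the only cosmetic omission is remarking that arc-consistency makes the images $R_{ij}(S)$ and $R_{ji}(b)$ nonempty, so that ``connected'' genuinely yields ``subtree.''
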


\end{definition}

\subsection{Weak VEP Classes and Majority-Closed Classes}

We first study a few properties of majority-closed classes. %Every \magenta{complete} majority-closed class is weakly closed under singletons. 

\begin{proposition}\label{majority-is-wcus}
Let $\Gamma$ be the set of {binary} relations   that {is} closed under a multi-sorted majority operation $\varphi=\{\varphi_1,...,\varphi_n\}$ on ${\D}=\{D_1,...,D_n\}$. Then $\Gamma$ is weakly closed under singletons.
\end{proposition}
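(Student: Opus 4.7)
The plan is to reduce the statement to the trivial observation that every binary singleton relation is itself closed under $\varphi$, and then to invoke the defining property of $\Gamma$ to conclude.

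First I would note that it suffices to show $\{\la a,b\ra\}\in\Gamma$ for any $R\in\Gamma$ and any $\la a,b\ra\in R$, because $\Gamma\subseteq\Gamma^+$ trivially (via the empty sequence of operations in Definition~\ref{dfn:definable}). Since $R$ has some scope $(i,j)$, the singleton $\{\la a,b\ra\}$ lives in $D_i\times D_j$, with $a\in D_i$ and $b\in D_j$.

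The crux is then to verify that $\{\la a,b\ra\}$ is closed under $\varphi$. This collapses to a single check: the only triple of elements from $\{\la a,b\ra\}$ is the constant triple $(\la a,b\ra,\la a,b\ra,\la a,b\ra)$, and applying $\varphi$ to it yields $\la\varphi_i(a,a,a),\varphi_j(b,b,b)\ra=\la a,b\ra$ by the idempotency identity $\varphi_k(d,d,d)=d$ from Definition~\ref{def_closure}. Hence the singleton is closed under $\varphi$, and by the defining hypothesis on $\Gamma$ it lies in $\Gamma$, and therefore in $\Gamma^+$.

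I do not anticipate a real obstacle: the proposition essentially records the compatibility of singletons with the idempotency of majority operations. The only minor subtlety is correct bookkeeping of the multi-sorted indexing of $\varphi$, applying $\varphi_i$ to the first coordinate and $\varphi_j$ to the second, but Definition~\ref{def_closure} handles this cleanly.
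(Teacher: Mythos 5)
Your proposal is correct and follows essentially the same route as the paper's proof: verify that the only triple from $\{\la a,b\ra\}$ is the constant one, apply idempotency of $\varphi_i$ and $\varphi_j$ coordinatewise to conclude the singleton is closed under $\varphi$, and then use the hypothesis that $\Gamma$ is \emph{the} set of binary relations closed under $\varphi$ to place it in $\Gamma\subseteq\Gamma^+$. No gaps.
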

\begin{proof}
Suppose $R$ is a relation in $\Gamma$ and $\langle a,b \rangle \in R$  $\subseteq D_i\times D_j$. We show that $\{\langle a,b \rangle\}$ is closed under $\varphi$. For any $t_1,t_2,t_3\in \{\langle a,b \rangle\}$, we have $t_1=t_2=t_3=\langle a,b\rangle$, and, hence, $\varphi(t_1,t_2,t_3)=\langle \varphi_{i}(a,a,a),\varphi_{j}(b,b,b)\rangle =\langle a, b\rangle$. This shows that $\{\langle a,b \rangle\}$ is closed under $\varphi$ and, hence, a relation in $\Gamma$. %\qed
~\hfill\qed\end{proof}

Majority-closed relations are \emph{decomposable}.
\begin{definition}
An $m$-ary relation $R$ is said to be \emph{$r$-decomposable} if, for any $m$-tuple $t$,  $t\in R$ if $\pi_I(t)\in \pi_I(R)$ for all $I=(i_1,\ldots,i_k)$ (a list of indices from $\{1,\ldots,m\}$) with $k\le r$, where $\pi_I(t)=\langle t[i_1],...,t[i_k]\rangle$ and $\pi_I(R)=\{\langle t[i_1],...,t[i_k]\rangle \mid t\in R\}$.
\end{definition}

%{The following theorem relates majority-closed relations to decomposable relations.} 

\begin{theorem}\cite{jeavons1998constraints} \label{theorem-jeavons}
Let $\Gamma$ be a set of {binary} relations over a set of finite domains ${\D}=\{D_1,\ldots,D_n\}$. The following {statements} are equivalent:
\begin{itemize}
 \item[(1)] {$\Gamma$ is a majority-closed constraint language.}
 \item[(2)] Every $R \in \Gamma^+$ is $2$-decomposable.
 \item[(3)] For every $\mathcal{N} \in \csp({\Gamma})$, establishing strong PC in $\mathcal{N}$ ensures global consistency. 
\end{itemize}
\end{theorem}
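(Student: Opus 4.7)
The plan is to establish the cyclic implications $(1) \Rightarrow (2) \Rightarrow (3) \Rightarrow (1)$.

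For $(1) \Rightarrow (2)$, I would first verify that closure under a (multi-sorted) majority operation $\varphi$ is preserved by the three constructors of $\Gamma^+$, namely Cartesian product, equality selection, and projection; each case is a routine component-wise check. Thus every $R \in \Gamma^+$ is closed under $\varphi$, and it remains to show that any such $R$ is 2-decomposable. Let $R$ be $m$-ary, and let $t$ be a tuple all of whose 2-projections lie in the corresponding 2-projections of $R$; choose witnesses $s_{ij} \in R$ with $\pi_{ij}(s_{ij}) = \pi_{ij}(t)$. I would proceed by induction on the size of the set $A \subseteq \{1,\ldots,m\}$ of coordinates on which some tuple of $R$ agrees with $t$. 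Starting from $|A|=2$, if $i_3 \notin A$ then applying $\varphi$ to a tuple agreeing with $t$ on $A$, together with two of the pairwise witnesses $s_{i_1 i_3}$ and $s_{i_2 i_3}$, produces by the majority identities a tuple of $R$ agreeing with $t$ on $A \cup \{i_3\}$. Iterating gives $t \in R$.

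For $(2) \Rightarrow (3)$, let $\mathcal{N}=\langle V,\mathcal{D},C\rangle$ be a strongly PC network in $\csp(\Gamma)$. I would argue by induction on $|V \setminus V'|$ that every partial solution $\sigma$ on $V'$ extends to a full solution. The inductive step reduces to showing that for any $v_j \in V \setminus V'$ the unary set $U_j = \bigcap_{v_i \in V' \cap N(v_j)} R_{ij}(\sigma(v_i)) \subseteq D_j$ is nonempty. Using that singletons $\{\sigma(v_i)\}$ lie in $\Gamma^+$ (by weak closure under singletons, e.g. Proposition~\ref{majority-is-wcus}), one can encode $U_j$, together with the equality of coordinates forcing a common witness, as a single relation in $\Gamma^+$; strong PC plus $\sigma$ being a partial solution ensures that every 2-projection of this encoded relation is nonempty, and then 2-decomposability from (2) produces a tuple in it, i.e., a value in $U_j$. (Equivalently, the entire solution relation of $\mathcal{N}$ restricted by $\sigma$ is expressible in $\Gamma^+$, hence 2-decomposable, and strong PC guarantees its 2-projections are as required.)

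The implication $(3) \Rightarrow (1)$ is the main obstacle. The standard route is through the \emph{indicator problem of order $3$} for $\Gamma$: introduce a variable ranging over each $D_i$ for every triple of positions in each relation of $\Gamma$, in such a way that solutions are precisely the ternary polymorphisms of $\Gamma$; then adjoin unary constraints encoding the diagonal majority identities $\varphi(a,a,b)=\varphi(a,b,a)=\varphi(b,a,a)=a$. Any solution of the resulting network is by construction a majority polymorphism of $\Gamma$. The delicate step is to verify that this network is already strongly PC, exploiting that the binary projections of the imposed constraints come from $\Gamma$ itself and that the diagonal identities are internally consistent. Once strong PC is established, hypothesis (3) yields global consistency, hence a solution, and thus the desired multi-sorted majority operation. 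The combinatorial verification that the indicator network is strongly PC, and that its solution components combine into a well-defined majority operation on all domains, is where the real technical weight lies.
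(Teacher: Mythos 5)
This theorem is imported from \cite{jeavons1998constraints} and the paper gives no proof of it, so there is no in-paper argument to compare against; I am assessing your attempt on its own terms. Your architecture --- the cycle $(1)\Rightarrow(2)\Rightarrow(3)\Rightarrow(1)$ via a Baker--Pixley-style decomposability argument, a $\Gamma^+$-definability argument, and the order-3 indicator problem --- is the standard one, and the core of your $(2)\Rightarrow(3)$ step is sound: the join $S$ of the constraints $R_{i,j+1}$ incident to the next variable lies in $\Gamma^+$, strong PC (arc-consistency is what makes the unary projections work) guarantees that all projections of $\langle\sigma(v_{i_1}),\ldots,\sigma(v_{i_p})\rangle$ of arity at most $2$ lie in the corresponding projections of $\pi_{1,\ldots,p}(S)$, and $2$-decomposability yields the extension. (The detour through singletons is both unnecessary and unjustified here: under hypothesis (2) alone you have no right to assume $\{\langle a,b\rangle\}\in\Gamma^+$, since Proposition~\ref{majority-is-wcus} presupposes majority-closure.)

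Two steps as written would fail, however. First, the inductive step of $(1)\Rightarrow(2)$ is incorrect once $|A|\ge 3$: if $u$ agrees with $t$ on $A$ and you form $\varphi(u,s_{i_1i_3},s_{i_2i_3})$, then at a coordinate $j\in A\setminus\{i_1,i_2\}$ only $u$ is guaranteed to carry the value $t[j]$, and $\varphi_j(t[j],x,y)$ need not equal $t[j]$ (take $x=y\ne t[j]$); so the resulting tuple is only guaranteed to agree with $t$ on $\{i_1,i_2,i_3\}$ and the induction never advances past three coordinates. The correct step strengthens the hypothesis to ``for \emph{every} $A$ with $|A|=k$ some tuple of $R$ agrees with $t$ on $A$'' and, for a $(k{+}1)$-set $B$, applies $\varphi$ to three tuples agreeing with $t$ on $B\setminus\{i_1\}$, $B\setminus\{i_2\}$, $B\setminus\{i_3\}$ respectively, so that every coordinate of $B$ is matched by at least two of the three arguments. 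Second, in $(3)\Rightarrow(1)$ adjoining unary singleton constraints generally takes the indicator network outside $\csp(\Gamma)$, so hypothesis (3) no longer applies to it, and the claim that the resulting network ``is already strongly PC'' is not the right target in any case. The standard repair keeps the indicator problem itself (all of whose constraints are copies of relations of $\Gamma$), enforces strong PC on it, verifies that the assignment sending each variable $(a,a,b)$, $(a,b,a)$, $(b,a,a)$ to $a$ survives as a \emph{partial solution} of the closure, and only then invokes global consistency to extend it to a full solution, i.e.\ a majority polymorphism. You correctly identify this as where the technical weight lies, but you do not carry it out, so the third implication remains a sketch rather than a proof.
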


\black {Finally}, we show that complete weak VEP classes are majority-closed classes.
\begin{theorem} \label{VEP_SPCP}
Let $\Gamma$ be a complete set of binary relations over a set of finite domains ${\D}=\{D_1$, ..., $D_n\}$. Then $\Gamma$ has weak VEP iff it is {a majority-closed class.} 
\end{theorem}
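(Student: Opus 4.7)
Assuming $\Gamma$ is closed under a multi-sorted majority $\varphi = \{\varphi_1, \ldots, \varphi_n\}$, I will take $\N \in \csp(\Gamma)$, a variable $v_x$ AC relative to $E_x$, and a solution $\sigma = \la a_1, \ldots, a_{n-1}\ra$ of $\N_{-x}$ (WLOG $x = n$), and exhibit $a_n \in \bigcap_{R_{in} \in E_n} R_{in}(a_i)$. The cases $|E_n| \le 1$ fall out of the singleton update rule in Eq.~\eqref{eqq:1neighbor}. For $|E_n| = k \ge 2$, the eliminated constraints in $\N_{-n}$ force the unary slices $R_{in}(a_i) \subseteq D_n$ to pairwise intersect, while AC keeps each non-empty, and I will induct on $k$. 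Given the claim on every $(k{-}1)$-subfamily, for each $l \in \{1,2,3\}$ I pick $B_l$ in the $(k{-}1)$-fold intersection omitting index $l$, and set $c := \varphi_n(B_1, B_2, B_3)$. For $i \ge 4$, the tuples $(a_i, B_1), (a_i, B_2), (a_i, B_3)$ all lie in $R_{in}$, so $\varphi$-closure of $R_{in}$ gives $(a_i, c) \in R_{in}$. For $i \in \{1,2,3\}$, at most $B_i$ may be missing from $R_{in}(a_i)$; I will invoke AC to pick $c_i \in D_i$ with $(c_i, B_i) \in R_{in}$ and apply $\varphi$ to three tuples of $R_{in}$ with second coordinates $B_1, B_2, B_3$, whose first coordinates are $a_i$ except at position $i$, where it is $c_i$. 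Majority in the first slot then returns $a_i$ and the second slot is $c$, yielding $(a_i, c) \in R_{in}$ and completing the induction.

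\textbf{Plan for ($\Rightarrow$) Weak VEP implies majority-closed.} Via Theorem~\ref{theorem-jeavons} it suffices to show that strong PC implies global consistency for every $\N \in \csp(\Gamma)$. Given a strongly PC $\N$, a partial solution $\sigma$ on $V' \subsetneq V$, and $v_x \in V \setminus V'$, I will build an auxiliary $\N'' \in \csp(\Gamma)$ on $V' \cup \{v_x\}$ by pinning each $v_i \in V'$ to $\{a_i\}$, keeping $D_x$ on $v_x$, and inheriting the constraints of $\N$ accordingly; weak singleton closure together with completeness of $\Gamma$ place each restricted constraint inside $\Gamma$. Strong PC of $\N$, in particular $R_{ij} \subseteq R_{ix} \circ R_{xj}$ for every triangle through $v_x$, makes $\sigma|_{V'}$ a solution of $\N''_{-x}$, and the AC hypothesis of weak VEP at $v_x$ in $\N''$ reduces to $R_{ix}(a_i) \ne \emptyset$ for each neighbour $v_i$ of $v_x$ in $V'$, which is immediate from AC of $\N$. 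Weak VEP then supplies an $a_x$ extending $\sigma|_{V'}$ in $\N''$, hence extending $\sigma$ in $\N$; iterating over $V \setminus V'$ yields a full extension of $\sigma$ to all of $V$, establishing global consistency of $\N$.

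\textbf{Main obstacle.} The subtlest step will be verifying that the AC precondition of weak VEP truly transfers to the collapsed auxiliary network $\N''$ in the ($\Rightarrow$) direction: once the $V'$-variables are pinned to singletons, the constraint $R_{ix}$ remains AC in the direction $v_i \to v_x$ (because $R_{ix}(a_i) \ne \emptyset$ by AC of $\N$) but not obviously in the reverse direction $D_x \subseteq R_{ix}(a_i)$. The forward direction is the one actually needed to invoke weak VEP; should a symmetric reading be adopted, I will first refine $D_x$ in $\N''$ to $\bigcap_i R_{ix}(a_i)$ and reuse strong PC of $\N$ to check that $\sigma|_{V'}$ still solves the refined $\N''_{-x}$, so that weak VEP still delivers an $a_x$ in this intersection and the rest of the argument is unchanged.
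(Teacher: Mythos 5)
Your ($\Leftarrow$) direction is correct and takes a genuinely different route from the paper. The paper argues by contradiction: it forms the $q$-ary relation $R$ of solutions of the star network around $v_x$ projected onto the neighbours, observes that $R\in\Gamma^+$, and contradicts the $2$-decomposability guaranteed by Theorem~\ref{theorem-jeavons}. You instead prove directly, by induction on $k=|E_n|$ using the majority operation $\varphi_n$, that the pairwise-intersecting slices $R_{in}(a_i)\subseteq D_n$ have a common point: the step $c=\varphi_n(B_1,B_2,B_3)$, with the repair values $c_i$ supplied by arc-consistency of $v_n$ for the at-most-one missing support at each $i\in\{1,2,3\}$, is exactly right, and it is precisely here that the AC hypothesis of weak VEP is consumed. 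Your argument is more elementary, bypasses Theorem~\ref{theorem-jeavons} for this direction, and does not need completeness of $\Gamma$, so it shows that \emph{every} majority-closed class has weak VEP.

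The ($\Rightarrow$) direction has a genuine gap, and it is exactly the point you flag as the main obstacle. Under the paper's definition, ``$v_x$ is AC relative to all relations in $E_x$'' means that every value of $D_x$ has a support in each neighbour's domain; this is the reading your own ($\Leftarrow$) argument relies on (you need some $c_i$ with $\la c_i,B_i\ra\in R_{in}$ for an arbitrary $B_i\in D_n$), and the reading enforced by Lines~\ref{alo2:ln:7}--\ref{alo2:ln:10} of Algorithm~\ref{algorithm_strongppc+}. So the direction you dismiss as the ``symmetric reading'' is the one actually required, and in your pinned network $\N''$ it amounts to $D_x\subseteq R_{ix}(a_i)$ for every neighbour $v_i$, which fails in general. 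Your proposed patch --- shrinking $D_x$ to $I=\bigcap_i R_{ix}(a_i)$ --- restores the AC precondition but is circular: after the refinement $R'_{ix}=\{a_i\}\times I$ and $R'_{xj}=I\times\{a_j\}$, so $R'_{ix}\circ R'_{xj}$ is nonempty iff $I\neq\varnothing$, and hence $\sigma|_{V'}$ is a solution of the refined $\N''_{-x}$ only if $I\neq\varnothing$ --- which is precisely the conclusion you want weak VEP to deliver. The repair is to drop the pinning altogether and argue as the paper does: apply weak VEP directly to the restriction $\N|_{V'\cup\{v_x\}}$ of the strongly PC network. There, AC of $\N$ gives the precondition on $v_x$ with respect to the full domains $D_i$, path-consistency ($R_{ij}\subseteq R_{ix}\circ R_{xj}$) gives $(\N|_{V'\cup\{v_x\}})_{-x}=\N|_{V'}$, and $\sigma$ extends immediately; iterating over $V\setminus V'$ then yields global consistency as you intended.
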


\begin{proof}
We first deal with the `only if'
part. Suppose that $\Gamma$ is a complete set of binary relations that has weak VEP. By \black {Theorem}~\ref{theorem-jeavons}, we only need to show that for every BCN in $\csp({\Gamma})$, establishing strong PC ensures global consistency. Let $\mathcal{N}^0$ be a network in $\csp({\Gamma})$ and suppose $\mathcal{N}=\la V,{\D},C\ra$ is the network obtained by enforcing strong PC on $\mathcal{N}^0$. Since $\Gamma$ is complete and thus closed under operations {for achieving} AC and PC by Lemma~\ref{lem:closed_under_ac},  $\mathcal{N}$ is also a problem in $\csp({\Gamma})$. Suppose $\mathcal{N}$ is non-trivially inconsistent. We show that any partial solution of $\mathcal{N}$ can be extended to a solution of $\mathcal{N}$. 

Suppose $V'=\{v_{1},\ldots,v_{m-1}\} \subset V$ and $\sigma= \langle a_1,\ldots,$ $ a_{m-1} \rangle$ is a solution of $\mathcal{N}{\mid_{V'}}$, which is the restriction of $\mathcal{N}$ to $V'$. Assume further that 
$v_{m}$ $\not\in V'$ is a new variable and let $V''=V'\cup\{v_{m}\}$. We show that $\sigma$ can be consistently extended to $\mathcal{N}{\mid_{V''}}$, the restriction of $\mathcal{N}$ to $V''$. Because $\mathcal{N}$ is strongly PC, $\mathcal{N}{\mid_{V''}}$ is strongly PC as well. In particular, $v_i$ is AC relative to  $v_m$ for any $R_{im}$ in $C$, and $R_{ij}$ is PC relative to $v_m$ ({\it i.e.}, $R_{ij}\subseteq R_{im}\circ R_{mj}$) for any $i\not=j$  such that both $R_{im}$ and $R_{jm}$ are in $C$. By Definition~\ref{dfn:vep_network},  $\mathcal{N}{\mid_{V'}}$ is the same as $(\mathcal{N}{\mid_{V''}})_{-m}$, viz., the network obtained by eliminating $v_m$ from $\mathcal{N}{\mid_{V''}}$. Moreover, since $\mathcal{N}$ and, hence, $\mathcal{N}{\mid_{V''}}$ are AC, $v_m$ is AC relative to all constraints $R_{im}$ that are in $C$. By the assumption that $\Gamma$ has weak VEP,  $\sigma$ can be consistently extended to $v_m$. {Following} this reasoning, we will find a solution of $\mathcal{N}$ that extends $\sigma$.

Next, we consider the `if' part. Suppose that $\Gamma$ is a complete set of binary relations that is closed under some multi-sorted majority operation $\varphi=\{\varphi_1,\ldots,\varphi_n\}$ on $\D$. Let $\mathcal{N}=\la V,{\D},C\ra$ be a problem in $\csp({\Gamma})$ {and $v_x$ a variable in $V$}. Let $E_x=\{R_{ix}\mid R_{ix}\in C\}$, and $\mathcal{N}_{-x}=\la V\setminus\{v_x\},{\D},C'\ra$, where $C'=(C\cup \{R_{ix}\circ R_{xj} \cap R_{ij} \mid R_{jx},R_{ix}\in E_x\}) \setminus E_x$. {Suppose that $v_x$ is AC relative to all relations in $E_x$}. We only need to  show that {any solution of $\mathcal{N}_{-x}$ can be extended to a solution of $\mathcal{N}$}. We prove this by contradiction.

\begin{figure}[t!] 
%\vspace*{-2mm}
\centering
\includegraphics[width=240pt]{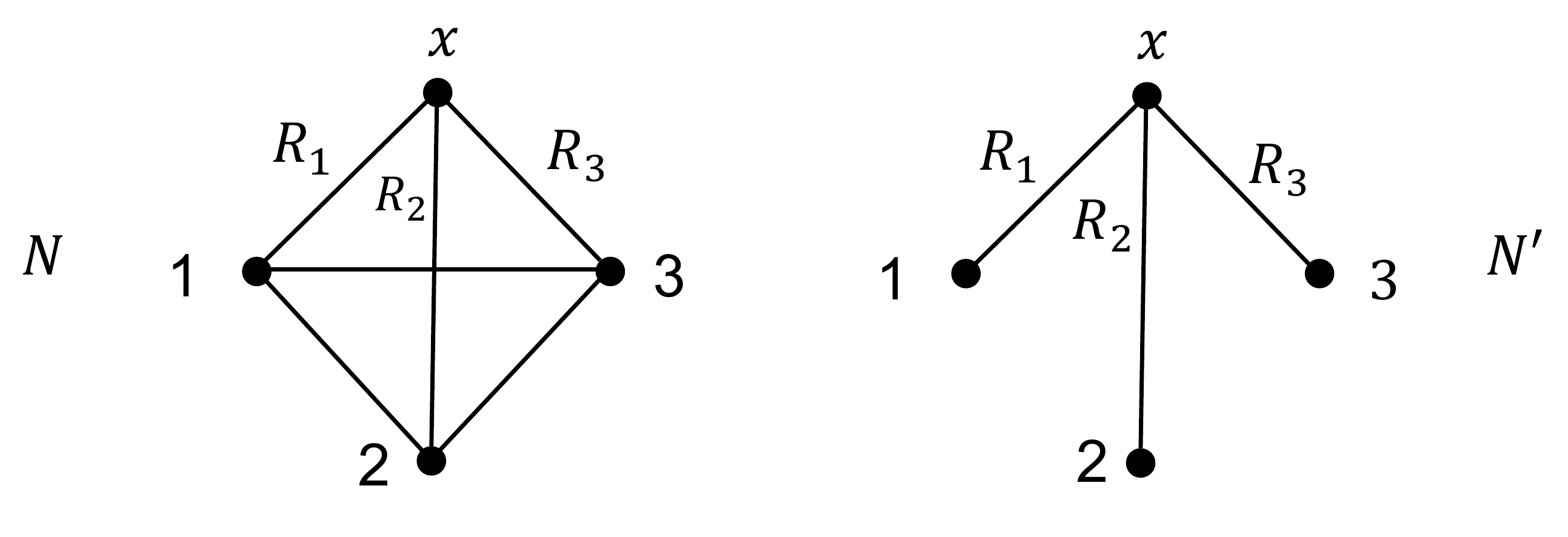}
%\vspace*{-2mm}
\caption{Illustration of proof of Theorem~\ref{VEP_SPCP}.}\label{proof3}
%\vspace*{-5mm}
\end{figure}

{Let $\sigma$ be a solution of $\mathcal{N}_{-x}$. Assume that $\sigma$ cannot be extended to a solution of $\mathcal{N}$.} Therefore, $E_x$ cannot be empty, otherwise $\sigma$ can be trivially extended to a solution of $\mathcal{N}$. {The case where $E_x=\{R_{ix}\}$ is a singleton is also impossible, as by \eqref{eqq:1neighbor}, $v_i$ is AC relative to $R_{ix}$ and we could extend $\sigma$ to a solution of $\mathcal{N}$ by assigning any valid value to $v_x$.} Suppose that $E_x$ has $q\ge 2$ constraints and let them be $((v_1,v_x),R_1)$, $\ldots$, $((v_q,v_x),R_q)$. We define a new problem $\mathcal{N}'=\la V,{\D},E_x\ra$ as illustrated in Fig.~\ref{proof3}. 
{Since} $v_x$ is AC relative to all relations in $E_x$, it is easy to verify that $\mathcal{N}'$ has a solution. For example, one can construct a solution of $\mathcal{N}'$ by simply picking a value from $D_x$ for $v_x$ and then extending that valuation to $v_1,\ldots,v_q$. Now, we construct a $q$-ary relation $R$ $=$ $\{\langle \gamma(v_1),\ldots,\gamma(v_q) \rangle\ \mid \gamma $ is a solution of $  \mathcal{N}'\}$. The solution set $S$ of $\mathcal{N}'$ can be obtained by using a sequence of the Cartesian product, equality selection, and projection operations \cite{jeavons1998constraints}. Therefore, $S\in \Gamma^+$. Since   $R=\pi_{v_1,\ldots,v_q}(S)$, we have $R\in\Gamma^+$. By Theorem~\ref{theorem-jeavons},  $R$ should be 2-decomposable\black {; h}owever, \black {in the sequel} we show that it is not, which is a contradiction.

{Let $t= \langle \sigma(v_1),\ldots,\sigma(v_q) \rangle$, where $\sigma$ is a solution of $\mathcal{N}_{-x}$. It is clear that $t$ is a solution of $\mathcal{N}'{\mid_{\{v_1,\ldots,v_q\}}}$. For any list of indices $I$ chosen from $\{1,\ldots,q\}$, with $|I| \le 2$, we claim that $\pi_I(t)\in \pi_I(R)$. We {recall} that, for any two relations $R_{ix},R_{jx}\in E_x$, the relation between $v_i$ and $v_j$ in $\mathcal{N}_{-x}$ is $R_{ij}  \cap (R_{ix}\circ R_{xj})$. Therefore, any partial solution $\langle \sigma(v_i),\sigma(v_j) \rangle (1\le i,j \le q)$ of $\mathcal{N}'$ can be consistently extended to $v_x$ and, by the construction of $\mathcal{N}'$, further consistently extended to a solution of $\mathcal{N}'$. Thus, $\pi_I(t)$ is in $\pi_I(R)$ for any list of indices $I$ chosen from $\{1,\ldots,q\}$, with $|I| \le 2$. However, $t\not\in R$ because $\sigma$ cannot be extended to a solution of $\mathcal{N}'$, which implies that $R$ is not 2-decomposable.} 
%\qed
~\hfill\qed\end{proof}

%\vspace*{-3mm}
\section{The Variable Elimination Algorithm {\dpcplus} }\label{dpcplus}

%So far we have examined the conditions under which a binary CSP can be  solved by variable elimination. 
This section presents a variant of \dpc for solving {BCNs} defined over any weak VEP class. The new algorithm, called \dpcplus and presented as Algorithm~\ref{algorithm_strongppc+}, can solve problems that are not solvable by \dpc (cf. Example~\ref{ex:dpc+} and Proposition~\ref{pro:limitationofdpc}). Compared with the variable elimination algorithm for solving CRC constraints \cite{zhang2009solving}, \dpcplus enforces a weaker AC condition instead of full AC.
We first justify the correctness of Algorithm~\ref{algorithm_strongppc+}. 
%which leads to a faster algorithm as we will verify in the experimental evaluation to follow.
%\black {Perhaps, have Proposition 9 as Theorem 9? \blue{SL: agree.}}

\begin{algorithm}[t]  
   \DontPrintSemicolon
   \algsetup{linenosize=\tiny}
   \footnotesize
   \SetAlCapFnt{\footnotesize}
   \SetAlCapNameFnt{\footnotesize}
   \SetAlFnt{\footnotesize}
   \SetKwInOut{Input}{Input}

   \SetKwInOut{Output}{Output}

   \Input{{A binary constraint network $\mathcal{N}=\la V,{\D},C \ra$;} \newline {an} ordering $\prec$ $=$ $(v_1,\ldots,v_n)$ on $V$.}
   \Output{{An equivalent \black {sub}network that is decomposable relative to $\prec$, or ``Inconsistent''.} }
   \BlankLine
   
       \For{$k\gets n$ \KwTo $1$}{

\If{$k$ \emph{has only one successor and that successor is} $i$}{\label{alo2:ln:2}
                   %\If{{$|E_{k}|=1$}}{
                   
                       $D_i \gets D_i \cap R_{ki}({D_k})$ \label{ln:AC1}
                       
                       \If{{$D_i= \varnothing$}}{ \label{ln:inconsistent1}
                       
                          \Return{{``Inconsistent''}} \label{alo2:ln:5}
                       
                       }
                   
                   }\Else{
                   
       \For{${i < k}$ \emph{with}  $R_{ik}\in C$}{ \label{alo2:ln:7}
                       $D_k \gets D_k \cap R_{ik}(D_i)$ \label{ln:AC2}
                       
                       \If{{$D_k= \varnothing$}}{\label{ln:inconsistent2}
                       
                          \Return{{``Inconsistent''}} \label{alo2:ln:10}
                       
                       }
                       }
                       \For{${i < k}$ \emph{with}  $R_{ik}\in C$}{ \label{alo2:ln:11}
                    $R_{ik}\gets R_{ik}\cap (D_i \times D_k)$  
                    
  \For{${j < i}$ \emph{with} $R_{jk} \in C$}{
  
  $R_{jk}\gets R_{jk}\cap (D_j \times D_k)$
            
                    \If{$R_{ij} \not\in C$}{
                         $R_{ij}\gets D_i \times D_j$
                         
                         $C \leftarrow C \cup \{R_{ij}\}$
                    }
            
                    $R_{ij} \leftarrow R_{ij} \cap (R_{ik} \circ R_{kj})$; \label{ln:PC}
                
                     \If{$R_{ij} = \varnothing$}{\label{ln:inconsistent3}
                
                          \Return{``Inconsistent''} \label{alo2:ln:20}
              }
              }
              
           }                        
          }
       }
\Return{$\mathcal{N}$}.
\caption{\dpcplus}\label{algorithm_strongppc+}
\end{algorithm}

\begin{theorem} \label{dpc+solveweakvep}
{Let $\Gamma$ be a complete weak VEP class. Suppose $\mathcal{N}$ {is} a BCN defined over $\Gamma$ and {$\prec=(v_1,\ldots,v_n)$} any ordering of variables of $\mathcal{N}$. Then\black {, given $\mathcal{N}$ and $\prec$, Algorithm~\ref{algorithm_strongppc+} does not return ``Inconsistent'' iff $\mathcal{N}$ is consistent.}}
 \end{theorem}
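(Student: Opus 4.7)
The statement is a biconditional. The \emph{soundness} direction — algorithm returning ``Inconsistent'' implies $\mathcal{N}$ is inconsistent — is immediate: every domain tightening and constraint update performed by Algorithm~\ref{algorithm_strongppc+} removes only values or tuples that cannot appear in any solution of the current network, so equivalence with $\mathcal{N}$ is preserved throughout and emptying a domain or a constraint witnesses inconsistency. My plan is to prove \emph{completeness} by constructing a solution along the ordering $\prec$.

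Let $\mathcal{N}^\ast$ denote the output, and for each $k$ let $\mathcal{N}^{(k)}$ denote the state of the network immediately after the outer-loop body for index $k$ has been executed, so that $\mathcal{N}^\ast=\mathcal{N}^{(1)}$ and $\mathcal{N}^{(k+1)}$ is the snapshot just after $v_{k+1}$ has been processed. I will rely on three observations: (i) at the end of iteration $k+1$, $v_{k+1}$ is arc-consistent relative to every $R_{i,k+1}\in E_{k+1}$ in $\mathcal{N}^{(k+1)}$ using the then-current $D_i^{(k+1)}$ (delivered by the if-branch update in the singleton case and by the else-branch AC update in the multi-predecessor case); (ii) in the multi-predecessor case, the propagation step also forces $R_{ij}^{(k+1)}\subseteq R_{i,k+1}^{(k+1)}\circ R_{k+1,j}^{(k+1)}$ whenever both $R_{i,k+1},R_{j,k+1}\in E_{k+1}$; and (iii) neither $D_{k+1}$ nor $R_{i,k+1}$ for $i\le k$ is modified after iteration $k+1$ completes, whence $D_{k+1}^\ast=D_{k+1}^{(k+1)}$ and $R_{i,k+1}^\ast=R_{i,k+1}^{(k+1)}$.

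The plan is then to prove by induction on $k$ that every partial solution $\sigma_k=\langle a_1,\ldots,a_k\rangle$ of $\mathcal{N}^\ast|_{V_k}$ extends to one of $\mathcal{N}^\ast|_{V_{k+1}}$, starting from any $a_1\in D_1^\ast$. The inductive step splits on $|E_{k+1}|$. If $E_{k+1}=\varnothing$, any element of the (non-empty) $D_{k+1}^\ast$ works; if $E_{k+1}=\{R_{i,k+1}\}$ is a singleton, observation (i) provides a support of $a_i\in D_i^\ast\subseteq D_i^{(k+1)}$ in $D_{k+1}^\ast$ via $R_{i,k+1}^\ast$. The main work is the case $|E_{k+1}|\ge 2$: I apply weak VEP to the intermediate network $\mathcal{N}^{(k+1)}|_{V_{k+1}}$, which lies in $\csp(\Gamma)$ because $\Gamma$ is complete. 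Observation (i) supplies the AC hypothesis on $v_{k+1}$; and using $D_i^\ast\subseteq D_i^{(k+1)}$, $R_{ij}^\ast\subseteq R_{ij}^{(k+1)}$, and observation (ii), $\sigma_k$ is a solution of $(\mathcal{N}^{(k+1)}|_{V_{k+1}})_{-(k+1)}$. Weak VEP then yields $a_{k+1}\in D_{k+1}^{(k+1)}=D_{k+1}^\ast$ with $\langle a_i,a_{k+1}\rangle\in R_{i,k+1}^{(k+1)}=R_{i,k+1}^\ast$, extending $\sigma_k$ to a solution of $\mathcal{N}^\ast|_{V_{k+1}}$ as required.

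The main obstacle is the asymmetry between the intermediate domains $D_i^{(k+1)}$ at which the AC premise for weak VEP is established and the potentially strictly smaller final $D_i^\ast$ after later iterations shrink them: this prevents invoking weak VEP on $\mathcal{N}^\ast|_{V_{k+1}}$ directly. The resolution is precisely the switch to the snapshot $\mathcal{N}^{(k+1)}|_{V_{k+1}}$, where the AC premise is still in force; observation (iii) then guarantees that the resulting witness transfers back to $\mathcal{N}^\ast|_{V_{k+1}}$, because the constraints $R_{i,k+1}$ and the domain $D_{k+1}$ are frozen from iteration $k+1$ onward.
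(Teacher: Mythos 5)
Your proof is correct and takes essentially the same route as the paper's: both arguments peel off variables along $\prec^{-1}$ and apply weak VEP to the intermediate snapshot networks (the paper's $\mathcal{Q}_k$ are precisely your $\mathcal{N}^{(k+1)}|_{V_{k+1}}$ after elimination), the only difference being that you build a solution forward from the output network---thereby also establishing directly the decomposability property the paper defers to Proposition~\ref{cor}---whereas the paper chains consistency backward from $\mathcal{Q}_{n-1}$ to $\mathcal{Q}_0=\mathcal{N}$. One cosmetic remark: in the singleton branch the algorithm makes $v_i$, not $v_{k+1}$, arc-consistent, so your observation~(i) is mis-stated for that case, although the way you actually use it there (a support for $a_i$ inside $D_{k+1}^{*}$) is the correct direction.
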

% \vspace*{-3mm}
 \begin{proof}
 Suppose the input network $\mathcal{N}$ is consistent. Since \dpcplus only prunes off certain infeasible domain values or relation tuples, the algorithm does not find any empty domains or relations in  Lines~\ref{ln:inconsistent1}, \ref{ln:inconsistent2}, and \ref{ln:inconsistent3}. Thus, it does not return ``Inconsistent''. 

Suppose the algorithm does not return ``Inconsistent'' and let $\mathcal{N}'=\la V,{\D'}$, $C' \ra$ be the output network, where $\D'=\{D'_1, ..., D'_n\}$. We show $\mathcal{N}'$ is consistent. %Let $\mathcal{N}'_k$ be the  subnetwork of $\mathcal{N}'$ induced by variables  $v_1, \dotsc, v_k$. 
 
Write $\mathcal{M}^{(0)}$ for $\mathcal{N}$ and write $\mathcal{M}^{(i)}$ for the result of the $i$-th loop in calling \dpcplus for the input $\mathcal{N}$ and $\prec=(v_1,v_2,...,v_n)$. Then $\mathcal{N}'=\mathcal{M}^{(n-1)}$ and all $\mathcal{M}^{(i)}\ (0\le i < n)$ are equivalent to $\mathcal{N}$. Let $\mathcal{Q}_k$ be the restriction of $\mathcal{M}^{(k)}$ to $\Set{v_1,v_2,...,v_{n-k}}$ ($0\leq k<n$). In essence, $\mathcal{Q}_k$ is obtained by eliminating $v_{n-k+1}$ from $\mathcal{Q}_{k-1}$ (Lines \ref{alo2:ln:2}-\ref{alo2:ln:5} or Lines \ref{alo2:ln:11}-\ref{alo2:ln:20}), while also enforcing AC (Lines \ref{alo2:ln:7}-\ref{alo2:ln:10}) for $v_{n-k+1}$ relative to all its successors if it has more than one successor. Since $\Gamma$ is a complete  weak VEP class, every BCN defined over $\Gamma$ has weak VEP. In particular, each $\mathcal{Q}_{k-1}$ is defined over $\Gamma$ and has weak VEP. This implies that every solution of $\mathcal{Q}_k$ can be extended to a solution of $\mathcal{Q}_{k-1}$. Since no inconsistency is detected in the process, we have $D'_1\not=\varnothing$ and thus $\mathcal{Q}_{n-1}$ is consistent. By the above analysis, this implies that $\mathcal{Q}_{n-2}, ..., \mathcal{Q}_1, \mathcal{Q}_0=\mathcal{M}^{(0)}=\mathcal{N}$ are all consistent. 
 ~\hfill\qed\end{proof}

The preceding proof also gives a way to generate \emph{all}
solutions of a consistent input network \emph{backtrack-free} by
appropriately instantiating the variables along the input ordering $\prec$. Indeed, for all $1 \le k < n$, a
solution $\la a_{1}, \dotsc, a_{k}\ra$ of $\mathcal{N}'_k$ can be
extended to a solution $\la a_{1}, \dotsc, a_{k+1} \ra$ of
$\mathcal{N}'_{k+1}$ by choosing an element $a_{k+1}$ from the
intersection of all $R_{i,k+1}(a_i)$ with
$i \le k$ and $R_{i,k+1} \in C'$, which is always nonempty as shown in the
preceding proof. As we know that if $\Gamma$ is majority-closed, the completion of $\Gamma$ is also majority-closed \cite{jeavons1998constraints}, and that complete majority-closed classes and complete weak VEP classes are equivalent by Theorem~\ref{VEP_SPCP}, this also proves the following result:

\begin{proposition}\label{cor}
Suppose $\mathcal{N}$ is a consistent BCN defined over a majority-closed class and $\prec$ $=$ $(v_1,...,v_n)$ an ordering of variables of $\mathcal{N}$. Then\black {, given
 $\mathcal{N}$ and $\prec$,  Algorithm~\ref{algorithm_strongppc+} returns an
equivalent subnetwork $\mathcal{N}'$ that} is decomposable relative to  $\prec$, \textit{i.e.}, any partial solution of $\mathcal{N}'$ on $\{v_1,...,v_k\}$ {for any $1\leq k <n$} can be extended to a solution of $\mathcal{N}'$.
\end{proposition}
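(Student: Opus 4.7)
The plan is to chain together the analysis already developed in the proof of Theorem~\ref{dpc+solveweakvep} with the equivalence between complete majority-closed classes and complete weak VEP classes. First I would note that, since $\Gamma$ is majority-closed, its completion $\Gamma^c$ is also majority-closed \cite{jeavons1998constraints}, and hence by Theorem~\ref{VEP_SPCP} is a complete weak VEP class. Every constraint produced during a run of Algorithm~\ref{algorithm_strongppc+} is obtained from $\Gamma$ by intersection, composition, and Cartesian product with a tightened domain, so it lies in $\Gamma^c$. Thus every intermediate network, and in particular the output $\mathcal{N}'=\la V,{\D}',C'\ra$, belongs to $\csp(\Gamma^c)$. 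Because $\mathcal{N}$ is consistent, Theorem~\ref{dpc+solveweakvep} applied to $\Gamma^c$ guarantees that the algorithm does not return ``Inconsistent'' and that $\mathcal{N}'$ is equivalent to $\mathcal{N}$.

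Next I would prove decomposability by induction on $k$. Writing $\mathcal{N}'_k$ for the restriction of $\mathcal{N}'$ to $\{v_1,\ldots,v_k\}$, suppose $\sigma_k=\la a_1,\ldots,a_k\ra$ is a solution of $\mathcal{N}'_k$; the goal is to produce $a_{k+1}\in D'_{k+1}$ extending $\sigma_k$ to a solution of $\mathcal{N}'_{k+1}$, then iterate from $k$ up to $n-1$. The key structural claim is that $\mathcal{N}'_k$ coincides with $(\mathcal{N}'_{k+1})_{-(k+1)}$ in the sense of Definition~\ref{dfn:vep_network}: during iteration $k+1$ of Algorithm~\ref{algorithm_strongppc+}, the multi-successor branch (Lines~\ref{alo2:ln:11}--\ref{alo2:ln:20}) adds or tightens exactly the constraints $R_{ij}\cap (R_{i,k+1}\circ R_{k+1,j})$, while the single-successor branch (Lines~\ref{alo2:ln:2}--\ref{alo2:ln:5}) tightens the affected domain exactly as in Eq.~\eqref{eqq:1neighbor}; and no later iteration touches relations or domains incident to $v_{k+1}$.

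With this identification in hand, I would dispatch the three elimination cases separately. If $E_{k+1}=\varnothing$ in $\mathcal{N}'_{k+1}$, any element of the nonempty $D'_{k+1}$ serves as $a_{k+1}$. If $E_{k+1}=\{R_{i,k+1}\}$, the tightening $D'_i\subseteq R_{k+1,i}(D'_{k+1})$ performed by the algorithm ensures some $a_{k+1}\in D'_{k+1}$ compatible with $a_i$. In the multi-successor case, Lines~\ref{alo2:ln:7}--\ref{alo2:ln:10} enforce $v_{k+1}$ to be AC relative to every $R_{i,k+1}\in E_{k+1}$, so the weak VEP of $\Gamma^c$ applies to $\mathcal{N}'_{k+1}$ and $\sigma_k$ extends. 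Equivalently, as remarked in the paragraph preceding the proposition, $\bigcap_{i\le k,\,R_{i,k+1}\in C'} R_{i,k+1}(a_i)$ is nonempty, which gives a direct backtrack-free choice for $a_{k+1}$.

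The main obstacle is not a deep one but rather careful bookkeeping: precisely matching the updates performed in iteration $k+1$ of Algorithm~\ref{algorithm_strongppc+} with the elimination prescribed by Definition~\ref{dfn:vep_network}, and verifying that later iterations (on $v_k,v_{k-1},\ldots$) never modify $R_{i,k+1}$ or shrink a domain in a way that would invalidate the AC hypothesis for $v_{k+1}$ inside $\mathcal{N}'_{k+1}$. Once this alignment is established, the proposition reduces directly to the weak VEP of $\Gamma^c$ together with the elementary extension arguments handling the empty- and single-successor cases.
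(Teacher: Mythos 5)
Your proposal follows essentially the same route as the paper: the paper derives Proposition~\ref{cor} directly from the proof of Theorem~\ref{dpc+solveweakvep}, noting that the completion of a majority-closed class is majority-closed and hence a complete weak VEP class by Theorem~\ref{VEP_SPCP}, so that each restriction $\mathcal{N}'_k$ extends to $\mathcal{N}'_{k+1}$ by picking $a_{k+1}\in\bigcap_{i\le k,\,R_{i,k+1}\in C'}R_{i,k+1}(a_i)$. Your additional bookkeeping --- identifying $\mathcal{N}'_k$ with $(\mathcal{N}'_{k+1})_{-(k+1)}$ and checking that later iterations do not disturb constraints incident to $v_{k+1}$ --- is a correct and slightly more explicit rendering of the same argument.
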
 
 
 Note that Lines \ref{alo2:ln:2}-\ref{alo2:ln:10} in \dpcplus  {do} not achieve DAC of input {networks.} Therefore, \dpcplus \emph{does not} achieve strong DPC. Since the overall runtime of Lines \ref{alo2:ln:2}-\ref{alo2:ln:10} is the same {as} enforcing DAC, this {places} \dpcplus in the same time complexity class as \dpc. %On the other hand, it is not difficult to construct a BCN that can be solved by \dpcplus but not by \dpc. 
The following example, however, gives a BCN that can be solved by \dpcplus but not by \dpc, which shows that the loop in Lines~\ref{alo2:ln:7}-\ref{alo2:ln:10} is necessary.

\begin{figure}[t!] 
\centering
%\vspace*{-1mm}
\includegraphics[width=260pt]{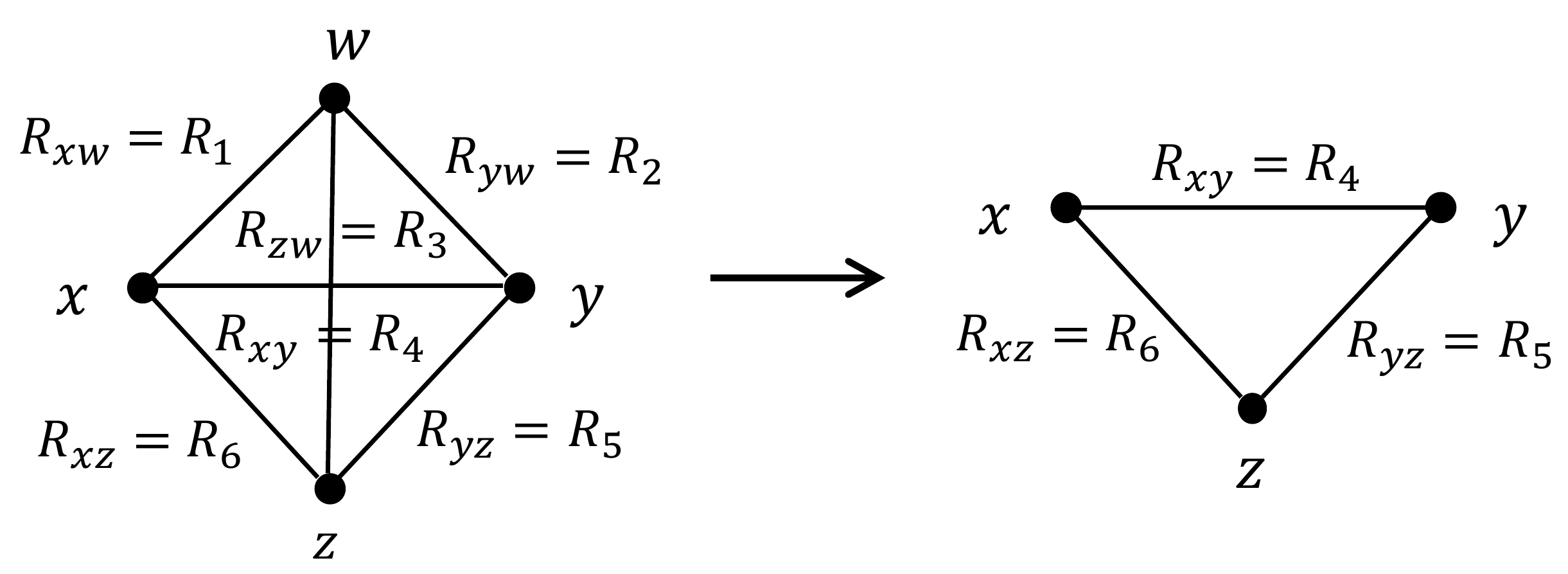}
\caption{{A constraint network $\mathcal{N}$ and its elimination $\mathcal{N}_{-w}$.}}
\label{counter_example1}
\vspace*{-3mm}
\end{figure}

\begin{example}\label{ex:dpc+}
Let $D=\{a,b,c\}$ and $\varphi$ be the majority operation on $D$ such that for all $i,j,k \in D$, $\varphi(i,j,k) = a$ if $i \neq j$, $j \neq k$, and $i \neq k$, and $\varphi(i,j,k) = r$ otherwise, where $r$ is the repeated value (\textit{e.g.}, $\varphi(b,c,b)=b$). Let $\Gamma=\{R_1,R_2,R_3,R_4,R_5,R_6\}$, where $R_1=\{\langle a,a\rangle,\langle a,c\rangle\}$, $R_2=\{\langle c,c\rangle,\langle c,b\rangle\}$, $R_3=\{\langle b,b\rangle,\langle b,a\rangle\}$, $R_4=\{\langle a,c \rangle\}$, $R_5=\{\langle c,b \rangle\}$, and $R_6=\{\langle a,b \rangle\}$. Every $R\in \Gamma$ is closed under the majority operation $\varphi$ on $D$. Now, {consider} the constraint network $\mathcal{N}\in \csp(\Gamma)$ as presented in Figure~\ref{counter_example1}. Since $R_{xw} \circ R_{wz}=R_6$, $R_{xw} \circ R_{wy}=R_4$, and $R_{yw} \circ R_{wz}=R_5$, the eliminated network $\mathcal{N}_{-w}$ is the same as the restriction of $\mathcal{N}$ to the set of variables $\{{v_x,v_y,v_z}\}$. Let $\sigma({v_x})=a,\sigma({v_y})=c,\sigma({v_z})=b$. Then $\sigma$ is a solution of $\mathcal{N}_{-w}$, but $\sigma$ cannot be extended to a solution of {$\mathcal{N}$}. Thus, $\mathcal{N}$    and hence $\Gamma$ do not have VEP. By Theorem~\ref{proposition_VEP_DPCP}, \dpc does not decide $\csp(\Gamma)$. 

On the other hand, since $\Gamma$ is majority-closed, by Proposition~\ref{cor}, \dpcplus can correctly decide the consistency of $\mathcal{N}$. This observation is confirmed by calling the two algorithms on $\mathcal{N}$. Take the PEO $\prec$ $=$ $(w,x,y,z)$ as an example; the other PEOs are  analogous. Let $(\mathcal{N},\prec)$ be an input to \dpc. After processing $w$, we have $D_x=\{a\},D_y=\{c\},D_z=\{b\}$ and $R_{xy}=\{\la a,c \ra\}$, $R_{xz}=\{\la a,b \ra\}$, $R_{zy}=\{ b,c \}$. We can observe that $\la x=a,y=c,z=b\ra$ is a solution to the eliminated subnetwork. Thus, if we keep running \dpc, it will not detect inconsistency. On the other hand,  for \dpcplus, when eliminating $w$, \dpcplus makes $w$ AC relative to its neighbors. Note that \dpc does not perform this operation. After that, $D_w$ is empty, and the algorithm will stop and output ``Inconsistent". 
\end{example}

The following result shows that complete majority-closed classes over domains with more than two elements cannot have VEP.
\begin{proposition}\label{pro:limitationofdpc}

Let $\varphi=\{\varphi_1,\ldots,\varphi_n\}$ be a  majority operation on $\D=\{D_1,\ldots,D_n\}$. If there exists a domain $D_i$ in $\D$ that contains more than two elements, then the set $\Gamma_\varphi$ of binary relations that are closed under $\varphi$ has neither the Helly property nor VEP.  
\end{proposition}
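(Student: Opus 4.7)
The plan is to reduce everything to a single explicit counterexample, and lean on Theorem~\ref{prop:helly+}: since VEP and the Helly property are equivalent for any set of binary relations, it suffices to construct three binary relations in $\Gamma_\varphi$ that refute the Helly property; the failure of VEP then follows automatically.

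Let $D_i$ contain three distinct elements $a$, $b$, $c$, and choose any auxiliary index $i_0\ne i$ (the case $n=1$ makes the Helly property vacuous, so I may assume $n\ge 2$) and any element $e\in D_{i_0}$. Set
\begin{align*}
R_1 &= \{\langle e,a\rangle,\langle e,b\rangle\},\\
R_2 &= \{\langle e,b\rangle,\langle e,c\rangle\},\\
R_3 &= \{\langle e,a\rangle,\langle e,c\rangle\},
\end{align*}
each a subset of $D_{i_0}\times D_i$. Taking $a_1=a_2=a_3=e$, the three images $R_k(e)$ are the three two-element subsets of $\{a,b,c\}$: pairwise they share exactly one element ($\{b\}$, $\{c\}$, and $\{a\}$ respectively), but their triple intersection is empty. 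This directly refutes the Helly property, provided each $R_k$ actually belongs to $\Gamma_\varphi$.

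The only thing left to check is closure of each $R_k$ under $\varphi$. In any triple of tuples drawn from $R_k$, the first coordinate is always $e$, so $\varphi_{i_0}$ returns $e$; the second coordinate ranges over a two-element subset of $D_i$, so by pigeonhole at least two of the three values coincide, and the defining axiom of a majority operation forces $\varphi_i$ to return that repeated value—which lies back in the image. Hence $R_1,R_2,R_3\in\Gamma_\varphi$, and Theorem~\ref{prop:helly+} yields that $\Gamma_\varphi$ has neither the Helly property nor VEP.

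The hard part is really only bookkeeping: one must pick an auxiliary domain index $i_0\ne i$ so that the Helly-property framework (which requires $u_i\ne u_0$) applies. Once that is in place, the combinatorial core is the classical observation that the three two-element subsets of a three-element set pairwise intersect but have empty common intersection, together with the one-line pigeonhole check that a binary relation whose image-slice has at most two values is automatically closed under any majority operation.
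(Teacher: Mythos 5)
Your proof is correct and follows essentially the same route as the paper's: exhibit three two-element relations whose images at a single point are the three $2$-subsets of $\{a,b,c\}$, check closure under $\varphi$ (trivial by the majority identities), and invoke Theorem~\ref{prop:helly+} to transfer the failure of the Helly property to VEP. The only difference is cosmetic but in your favour: by placing the relations in $D_{i_0}\times D_i$ with $i_0\ne i$ you respect the $u_i\ne u_0$ requirement of Definition~\ref{dfn:helly}, whereas the paper's own witnesses live in $D_i\times D_i$.
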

\begin{proof}
Suppose $a,b,c$ are three different values from $D_i$. It is easy to see that the relations $R_1 =\Set{\la a,a\ra,\la a,b\ra}$, $R_2=\Set{\la a, b\ra,\la a,c\ra}$, and $R_3=\Set{\la a,a\ra,\la a,c\ra}$ are all closed under $\varphi$. Therefore, $R_1$, $R_2$, and $R_3$ are all in $\Gamma_{\varphi}$. Because any two of $R_1(a),R_2(a),R_3(a)$ have a common element but $R_1(a)\cap R_2(a)\cap R_3(a)=\varnothing$, this shows that $\Gamma_\varphi$ does not have the Helly Property and, hence by Theorem~\ref{prop:helly+}, does not have VEP.\qed
\end{proof}

This result shows that no complete VEP class could have a domain with 3 or more values. Therefore, there are no interesting complete constraint languages except the boolean ones that can be decided by \dpc (cf. Proposition~\ref{proposition_VEP_DPCP}), while all binary majority-closed classes (including CRC and tree-preserving constraints) can be decided by \dpcplus (cf. Proposition~\ref{cor}). 

%Indeed, interesting VEP classes, which also have the Helly property,  are more like the class of row-convex (tree-convex, resp.) constraints, which is not complete, but may have, roughly speaking, complete subclasses like the CRC \blue{(comment: since CRC appears here for the first time, we should say connected row-convex constraints and give a reference)} (the tree-preserving, resp.) constraint class which satisfies weak VEP. It is still unclear if this holds for every VEP  class.     

\dpcplus can also be used to solve majority-closed constraints of higher arities. This is because, by Theorem~\ref{theorem-jeavons}, every relation definable in a majority-closed language is $2$-decomposable. Therefore, for each majority relation $R$ of arity $m>2$, if a constraint $c=((y_1,...,y_m), R)$ appears in a constraint network $\mathcal{N}$, we could replace $c$ with a set of binary constraints $c_{ij}=((y_i,y_j) \mid \pi_{ij}(R))$ $(1\leq i <j\leq m$), where $\pi_{ij}(R)=\Set{\la t[y_i],t[y_j]\ra \mid t\in R}$.

\section{Evaluations}\label{evaluation}

%----------------------------------------------

\begin{figure}[htb]
  \pgfplotsset{width=.48\linewidth,compat=1.9}
  \hfill
\subcaptionbox{Evaluations in the number $n$ of variables. We set $\rho=0.5$, $l=0.3$, $d=100$.\label{fig5:a}}[.45\linewidth]{
\begin{tikzpicture}
\begin{axis}[
    xlabel={Number of variables [$n$]},
    ylabel={CPU time (sec)},
    xmin=20, xmax=120,
    ymin=0, ymax=350,
    xtick={20,40,60,80,100,120},
    ytick={50,100,150,200,250,300,350,400},
    legend pos=north west,
    legend style={font=\tiny},
    ymajorgrids=true,
    xmajorgrids=true,
    grid style=dashed,
]

    \addplot[
    color=blue,
    mark=square,
    ]
    coordinates {
    (20,76.2)(40,158.4)(60,211.3)(80,258.8)(100,289.9)(120,323.1)
    };
      \addlegendentry{$\mathsf{PC2001}$}
      
    \addplot[
    color=black,
    mark=*,
    ]
    coordinates {
    (20,52)(40,85)(60,120)(80,180)(100,210)(120,230)
    };
      \addlegendentry{$\mathsf{SAC3\text{-}SDS}$}

    \addplot[
    color=red,
    mark=triangle,
    ]
    coordinates {
    (20,15.9)(40,20.1)(60,28.1)(80,35.3)(100,46.6)(120,65.3)
    };
    %\legend{CuSO$_4\cdot$5H$_2$O}
    \addlegendentry{$\mathsf{DPC^*}$}

\end{axis}
\end{tikzpicture}}
\hfill
\subcaptionbox{Evaluations in the size $d$ of domains. We set $\rho=0.5$, $l=0.3$, $n=100$.\label{fig5:b}}[.5\linewidth]{
\begin{tikzpicture}
\begin{axis}[
    xlabel={Domain size [$d$]},
    ylabel={CPU time (sec)},
    xmin=50, xmax=300,
    ymin=0, ymax=1000,
    xtick={50,100,150,200,250,300},
    ytick={0,200,400,600,800,1000},
    legend pos=north west,
    legend style={font=\tiny},
    ymajorgrids=true,
    xmajorgrids=true,
    grid style=dashed,
]

\addplot[
    color=blue,
    mark=square,
    ]
    coordinates {
    (50,172)(100,285)(150,420)(200,560)(250,770)(300,890)
    };
    %\legend{CuSO$_4\cdot$5H$_2$O}
    \addlegendentry{$\mathsf{PC2001}$}
    
    \addplot[
    color=black,
    mark=*,
    ]
    coordinates {
    (50,102)(100,185)(150,320)(200,380)(250,470)(300,530)
    };
      \addlegendentry{$\mathsf{SAC3\text{-}SDS}$}

    \addplot[
    color=red,
    mark=triangle,
    ]
    coordinates {
    (50,20)(100,42)(150,65)(200,82)(250,101)(300,115)
    };
      \addlegendentry{$\mathsf{DPC^*}$}

\end{axis}
\end{tikzpicture}}
\subcaptionbox{Evaluations in the density $\rho$ of networks. \\ We set $l=0.3$, $d=100$, $n=100$.\label{fig5:c}}[.5\linewidth]{
\begin{tikzpicture}
\begin{axis}[
    xlabel={Density of networks [$\rho$]},
    ylabel={CPU time (sec)},
    xmin=0.1, xmax=0.5,
    ymin=0, ymax=450,
    xtick={0.1,0.2,0.3,0.4,0.5},
    ytick={0,50,100,150,200,250,300,350,400,450},
    legend pos=north west,
    legend style={font=\tiny},
    ymajorgrids=true,
    xmajorgrids=true,
    grid style=dashed,
]

\addplot[
    color=blue,
    mark=square,
    ]
    coordinates {
    (0.1,265.8)(0.2,270.3)(0.3,278.0)(0.4,280.0)(0.5,287.5)
    };
    %\legend{CuSO$_4\cdot$5H$_2$O}
    \addlegendentry{$\mathsf{PC2001}$}
    
    \addplot[
    color=black,
    mark=*,
    ]
    coordinates {
    (0.1,65.8)(0.2,87.3)(0.3,118.0)(0.4,170.0)(0.5,190.5)
    };
    %\legend{CuSO$_4\cdot$5H$_2$O}
    \addlegendentry{$\mathsf{SAC3\text{-}SDS}$}
    
    \addplot[
    color=red,
    mark=triangle,
    ]
    coordinates {
    (0.1,13.8)(0.2,21.3)(0.3,28.0)(0.4,38.0)(0.5,47.5)
    };
    %\legend{CuSO$_4\cdot$5H$_2$O}
    \addlegendentry{$\mathsf{DPC^*}$}

\end{axis}
\end{tikzpicture}}
\hfill
\subcaptionbox{Evaluations in the looseness $l$ of constraints. We set $\rho=0.5$, $d=100$, $n=100$.\label{fig5:d}}[.48\linewidth]{
\begin{tikzpicture}
\begin{axis}[
    xlabel={Looseness of constraints [$l$]},
    ylabel={CPU time (sec)},
    xmin=0.2, xmax=0.8,
    ymin=0, ymax=500,
    xtick={0.2,0.3,0.4,0.5,0.6,0.7,0.8},
    ytick={0,50,100,150,200,250,300,350,400,450,500},
    legend pos=north west,
    legend style={font=\tiny},
    ymajorgrids=true,
    xmajorgrids=true,
    grid style=dashed,
]

\addplot[
    color=blue,
    mark=square,
    ]
    coordinates {
    (0.2,277.9)(0.3,287)(0.4,291)(0.5,300)(0.6,321.6)(0.7,254.3)(0.8,150.1)
    };
    %\legend{CuSO$_4\cdot$5H$_2$O}
    \addlegendentry{$\mathsf{PC2001}$}
    
\addplot[
    color=black,
    mark=*,
    ]
    coordinates {
    (0.2,107.9)(0.3,148)(0.4,180)(0.5,210)(0.6,230.6)(0.7,184.3)(0.8,107.1)
    };
    %\legend{CuSO$_4\cdot$5H$_2$O}
    \addlegendentry{$\mathsf{SAC3\text{-}SDS}$}    
    
\addplot[
    color=red,
    mark=triangle,
    ]
    coordinates {
    (0.2,30.0)(0.3,33)(0.4,35)(0.5,34)(0.6,34)(0.7,32)(0.8,31)
    };
    %\legend{CuSO$_4\cdot$5H$_2$O}
    \addlegendentry{$\mathsf{DPC^*}$}

\end{axis}
\end{tikzpicture}}

\hfill{}

\caption{Performance comparisons among  $\mathsf{DPC^*}$, $\mathsf{SAC3\text{-}SDS}$ and $\mathsf{PC2001}$ for solving tree-preserving constraint networks. \label{exp_tree}}
\end{figure}
%----------------------------------------------

\begin{figure}[htb]
  \pgfplotsset{width=.48\linewidth,compat=1.9}
  \hfill
\subcaptionbox{Evaluations in the number $n$ of variables. We set $\rho=0.5$, $l=0.3$, $d=100$.\label{fig6:a}}[.45\linewidth]{
\begin{tikzpicture}
\begin{axis}[
    xlabel={Number of variables [$n$]},
    ylabel={CPU time (sec)},
    xmin=20, xmax=120,
    ymin=0, ymax=350,
    xtick={20,40,60,80,100,120},
    ytick={50,100,150,200,250,300,350,400},
    legend pos=north west,
    legend style={font=\tiny},
    ymajorgrids=true,
    xmajorgrids=true,
    grid style=dashed,
]

    \addplot[
    color=blue,
    mark=square,
    ]
    coordinates {
    (20,64.8)(40,121.8)(60,165.3)(80,215.8)(100,242.9)(120,269.1)
    };
      \addlegendentry{$\mathsf{PC2001}$}
      
    \addplot[
    color=black,
    mark=*,
    ]
    coordinates {
    (20,43.5)(40,65.4)(60,105)(80,140)(100,173)(120,192)
    };
      \addlegendentry{$\mathsf{SAC3\text{-}SDS}$}

    \addplot[
    color=red,
    mark=triangle,
    ]
    coordinates {
    (20,13.3)(40,16.75)(60,23.6)(80,29.6)(100,38.8)(120,54.3)
    };
    %\legend{CuSO$_4\cdot$5H$_2$O}
    \addlegendentry{$\mathsf{DPC^*}$}

\end{axis}
\end{tikzpicture}}
\hfill
\subcaptionbox{Evaluations in the size $d$ of domains. We set $\rho=0.5$, $l=0.3$, $n=100$.\label{fig6:b}}[.5\linewidth]{
\begin{tikzpicture}
\begin{axis}[
    xlabel={Domain size [$d$]},
    ylabel={CPU time (sec)},
    xmin=50, xmax=300,
    ymin=0, ymax=1000,
    xtick={50,100,150,200,250,300},
    ytick={0,200,400,600,800,1000},
    legend pos=north west,
    legend style={font=\tiny},
    ymajorgrids=true,
    xmajorgrids=true,
    grid style=dashed,
]

\addplot[
    color=blue,
    mark=square,
    ]
    coordinates {
    (50,132)(100,219)(150,353)(200,460)(250,592)(300,704)
    };
    %\legend{CuSO$_4\cdot$5H$_2$O}
    \addlegendentry{$\mathsf{PC2001}$}
    
    \addplot[
    color=black,
    mark=*,
    ]
    coordinates {
    (50,78)(100,142)(150,246)(200,292)(250,361)(300,387)
    };
      \addlegendentry{$\mathsf{SAC3\text{-}SDS}$}

    \addplot[
    color=red,
    mark=triangle,
    ]
    coordinates {
    (50,16)(100,35)(150,54)(200,63)(250,84)(300,100)
    };
      \addlegendentry{$\mathsf{DPC^*}$}

\end{axis}
\end{tikzpicture}}
\subcaptionbox{Evaluations in the density $\rho$ of networks. \\ We set $l=0.3$, $d=100$, $n=100$.\label{fig6:c}}[.50\linewidth]{
\begin{tikzpicture}
\begin{axis}[
    xlabel={Density of networks [$\rho$]},
    ylabel={CPU time (sec)},
    xmin=0.1, xmax=0.5,
    ymin=0, ymax=450,
    xtick={0.1,0.2,0.3,0.4,0.5},
    ytick={0,50,100,150,200,250,300,350,400,450},
    legend pos=north west,
    legend style={font=\tiny},
    ymajorgrids=true,
    xmajorgrids=true,
    grid style=dashed,
]

\addplot[
    color=blue,
    mark=square,
    ]
    coordinates {
    (0.1,224.8)(0.2,225.3)(0.3,231.0)(0.4,233.0)(0.5,240.5)
    };
    %\legend{CuSO$_4\cdot$5H$_2$O}
    \addlegendentry{$\mathsf{PC2001}$}
    
    \addplot[
    color=black,
    mark=*,
    ]
    coordinates {
    (0.1,59.8)(0.2,75.9)(0.3,108.0)(0.4,141.0)(0.5,171.5)
    };
    %\legend{CuSO$_4\cdot$5H$_2$O}
    \addlegendentry{$\mathsf{SAC3\text{-}SDS}$}
    
    \addplot[
    color=red,
    mark=triangle,
    ]
    coordinates {
    (0.1,12.5)(0.2,20.3)(0.3,25.0)(0.4,34.0)(0.5,41.5)
    };
    %\legend{CuSO$_4\cdot$5H$_2$O}
    \addlegendentry{$\mathsf{DPC^*}$}

\end{axis}
\end{tikzpicture}}
\hfill
\subcaptionbox{Evaluations in the looseness $l$ of constraints. We set $\rho=0.5$, $d=100$, $n=100$.\label{fig6:d}}[.48\linewidth]{
\begin{tikzpicture}
\begin{axis}[
    xlabel={Looseness of constraints [$l$]},
    ylabel={CPU time (sec)},
    xmin=0.2, xmax=0.8,
    ymin=0, ymax=500,
    xtick={0.2,0.3,0.4,0.5,0.6,0.7,0.8},
    ytick={0,50,100,150,200,250,300,350,400,450,500},
    legend pos=north west,
    legend style={font=\tiny},
    ymajorgrids=true,
    xmajorgrids=true,
    grid style=dashed,
]

\addplot[
    color=blue,
    mark=square,
    ]
    coordinates {
    (0.2,231.9)(0.3,239)(0.4,271)(0.5,279)(0.6,291.6)(0.7,211.3)(0.8,136.1)
    };
    %\legend{CuSO$_4\cdot$5H$_2$O}
    \addlegendentry{$\mathsf{PC2001}$}
    
\addplot[
    color=black,
    mark=*,
    ]
    coordinates {
    (0.2,93.9)(0.3,120)(0.4,163)(0.5,182)(0.6,201.6)(0.7,167.3)(0.8,90.1)
    };
    %\legend{CuSO$_4\cdot$5H$_2$O}
    \addlegendentry{$\mathsf{SAC3\text{-}SDS}$}    
    
\addplot[
    color=red,
    mark=triangle,
    ]
    coordinates {
    (0.2,29.1)(0.3,31.4)(0.4,32.7)(0.5,34)(0.6,32.7)(0.7,33.6)(0.8,30.1)
    };
    %\legend{CuSO$_4\cdot$5H$_2$O}
    \addlegendentry{$\mathsf{DPC^*}$}

\end{axis}
\end{tikzpicture}}

\hfill{}

\caption{Performance comparisons among  $\mathsf{DPC^*}$, $\mathsf{SAC3\text{-}SDS}$ and $\mathsf{PC2001}$ for solving random majority-closed constraint networks. \label{exp_majority}}
\end{figure}

In this section we experimentally compare algorithm $\mathsf{DPC^*}$ against the state-of-the-art algorithms for solving majority-closed constraint networks. These are $\mathsf{SAC3\text{-}SDS}$ \cite{bessiere2011efficient} and $\mathsf{PC2001}$ \cite{bessiere2005optimal}. $\mathsf{SAC3\text{-}SDS}$ is currently the best \emph{singleton arc-consistency} (SAC) enforcing algorithm~\cite{DBLP:conf/ijcai/DebruyneB97}. Enforcing either SAC\footnote{\emph{Singleton linear arc-consistency} (SLAC) is an alternative  consistency notion that can be enforced to solve majority-closed constraint network \cite{DBLP:conf/lics/Kozik16}, but no practical SLAC algorithms are available so far. 
} or PC correctly decides the consistency of a majority-closed constraint network \cite{chen2011arc,jeavons1998constraints}. \

Two different sets of data are considered for experiments, which are described as follows:

%\magenta{\bf{NOTE}:} \quad The following applications of CRC constraints have been mentioned several times in the paper. Consider remove the repetition.

\begin{itemize}
\item[(1)] Tree-preserving constraint networks. 

%The class of tree-preserving constraints \cite{KongLLL15} is a large \magenta{class} of majority-closed constraints that generalizes the class of connected row-convex (CRC) constraints \cite{DevilleBH99}. It has found useful applications in domains of scene labeling \cite{KongLLL15}, temporal reasoning \cite{DBLP:conf/aips/Kumar05}, geometric reasoning \cite{kumar_geometric_2004} and logical filtering \cite{DBLP:conf/aips/KumarR06}. The reader is kindly referred to Subsection~\ref{subsec:majority} for more details.

\item[(2)] Random majority-closed constraint networks. These can be used to test the average performance of different algorithms. To generate such networks, we need to generate random majority-closed constraint languages as follows. 
\begin{itemize}
\item Randomly define a majority operation $\otimes_i: D_i^3 \rightarrow D_i$ for each domain $D_i \in \D$ as follows: for any $ x,y,z\in D_i$, 
\begin{align}
\otimes_i(x,y,z) &= \left\{\begin{array}{cl} \label{eq:1neighbor}
        \text{any } v\in D_i,  &\quad \text{if } x,y,z \text{ are all different,}\\ 
         \text{any repeated value of } x,y,z,  & \quad \text{otherwise.} 
        \end{array}
         \right.
 \end{align}
\item Randomly generate constraints $R_{ij}\subseteq D_i \times D_j$ and test whether
 \begin{equation}\label{eq:maj}
 %\begin{split} 
 \{ \langle \otimes_i(t_x[1],t_y[1],t_z[1]),\otimes_j(t_x[2],t_y[2],t_z[2]) \rangle \mid  t_x, t_y, t_z \in R_{ij} \} \subseteq R_{ij}
% \end{split}
 \end{equation}
 holds. By definition, $R_{ij}$ is majority-closed under $(\otimes_i,\otimes_j)$ iff \eqref{eq:maj} holds. 

\end{itemize}

\end{itemize}

%\subsubsection{Random Majority-Closed Constraint Networks}
%We considered randomly generated BCNs over majority-closed languages for experiments. {Prior to} generating such BCNs $\mathcal{N}=\la V, {\D}, C\ra$, we first generate a large set of majority-closed constraints: 
%\begin{itemize}
%\item Randomly define a majority operation $\otimes_i: D_i^3 \rightarrow D_i$ for each domain $D_i \in \D$ as follows: for any $ x,y,z\in D_i$, 
%\begin{align}
%\otimes_i(x,y,z) &= \left\{\begin{array}{cl} \label{eq:1neighbor}
%        \text{any } v\in D_i,  &\quad \text{if } x,y,z \text{ are all different,}\\ 
%         \text{any repeated value of } x,y,z,  & \quad \text{otherwise.} 
%        \end{array}
%         \right.
% \end{align}
%\item Randomly generated constraints $R_{ij}\subseteq D_i \times D_j$ and tested whether
% \begin{equation}\label{eq:maj}
% %\begin{split} 
% \{ \langle \otimes_i(t_x[1],t_y[1],t_z[1]),\otimes_j(t_x[2],t_y[2],t_z[2]) \rangle \mid  t_x, t_y, t_z \in R_{ij} \} \subseteq R_{ij}
%% \end{split}
% \end{equation}
% holds. By definition, $R_{ij}$ is majority-closed under $(\otimes_i,\otimes_j)$ iff \eqref{eq:maj} holds. 
%
%\end{itemize}

 We used the model in \cite{bliek1999path,DevilleBH99} to generate random consistent constraint networks for experiments. These constraint networks were generated by varying four parameters: (1)~the number of variables $n$, (2)~the size of the domains~$d$, (3)~the density of the constraint networks $\rho$ (i.e. the ratio of non-universal constraints to $n^2$) and (4)~the looseness of constraints $l$ (i.e. the ratio of the number of allowed tuples to $d^2$). We fix three of the four parameters and vary the remaining parameter. Experiments were carried out on a computer with an Intel Core i5-4570 processor with a 3.2 GHz frequency per CPU core, and  4 GB memory.

The graphs in Fig.~\ref{exp_tree} and Fig.~\ref{exp_majority} illustrate the experimental comparisons among algorithms $\mathsf{DPC^*}$, $\mathsf{SAC3\text{-}SDS}$ and $\mathsf{PC2001}$ for solving tree-preserving and random majority-closed constraint networks respectively. The data points in each graph are CPU times averaged over 20 instances.

From Fig.~\ref{exp_tree} and Fig.~\ref{exp_majority}, we observe that all algorithms behave similarly to one another when solving tree-preserving and random majority-closed constraint networks. 
%\blue{COMMENT: I think that the difference in performance is negligible, I had to compare the graphs side by side to see that they are not the same, so perhaps we can drop the previous sentence}. 
Therefore, our analysis only focuses on Fig.~\ref{exp_tree} and the results are applicable to Fig.~\ref{exp_majority} as well.

 We observe in Fig.~\ref{fig5:a} and ~Fig.~\ref{fig5:b} that all algorithms approximately show linear time behaviors with respect to $n$ and $d$. On the other hand, Fig.~\ref{fig5:c} shows that $\mathsf{PC2001}$ is not sensitive to the density of networks whereas $\mathsf{DPC^*}$ and $\mathsf{SAC3\text{-}SDS}$ perform better when the density of networks is lower. Fig.~\ref{fig5:d} shows that the CPU time for $\mathsf{DPC^*}$ remains almost unchanged when increasing the looseness of constraints. However, the CPU times for  $\mathsf{PC2001}$ and $\mathsf{SAC3\text{-}SDS}$ both go up and then drop down when increasing the looseness of constraints. Finally, we also observe in all the graphs in Fig.~\ref{exp_tree} that the performance differences among $\mathsf{DPC^*}$, $\mathsf{PC2001}$ and $\mathsf{SAC3\text{-}SDS}$ are remarkable. $\mathsf{DPC^*}$ not only runs significantly faster than $\mathsf{PC2001}$ and $\mathsf{SAC3\text{-}SDS}$, but it also scales up to 7 times and 5 times better than $\mathsf{PC2001}$ and $\mathsf{SAC3\text{-}SDS}$ respectively. This mainly owes to the fact that $\mathsf{DPC^*}$ is a single pass algorithm over the ordered input constraint networks.

%\vspace*{-2mm}
\section{Conclusion}\label{conclusion}
This paper investigated {which} constraint satisfaction problems can be efficiently decided by enforcing directional path-consistency. Given
a complete binary constraint language $\Gamma$, it turns out that \dpc can decide $\csp(\Gamma)$ if $\Gamma$ is defined over domains with less than three values. For a possibly incomplete binary constraint language $\Gamma$, we proved that $\Gamma$ has the Helly property if, and only if, for any non-trivially inconsistent and triangulated binary constraint network $\mathcal{N}$ over $\Gamma$,  $\mathcal{N}$ is consistent if it is  strongly DPC relative to the inverse ordering of some perfect elimination ordering of the constraint graph of $\mathcal{N}$.  The classes of row-convex \cite{van1995minimality} and tree-convex \cite{ZhangY03} constraints are examples of constraint classes which have the Helly property. More importantly, we presented the algorithm \dpcplus, a simple variant of \dpc, which can decide the CSP of any majority-closed constraint language, and is sufficient for guaranteeing backtrack-free search for majority-closed constraint networks, which have been found applications in various domains, such as scene labeling, temporal reasoning, geometric reasoning, and logical filtering. Our evaluations also show that \dpcplus significantly outperforms the state-of-the-art algorithms for solving majority-closed constraint networks.

% \dpcplus can also be adapted to solve majority-closed constraints of higher arities. This is because, by Theorem~\ref{theorem-jeavons}, every relation definable in a majority-closed language is $2$-decomposable. Therefore, for each majority relation $R$ of arity $m>2$, if a constraint $c=((y_1,...,y_m), R)$ appears in a constraint network $\mathcal{N}$, we could replace $c$ with a set of binary constraints $c_{ij}=((y_i,y_j) \mid \pi_{ij}(R))$ $(1\leq i <j\leq m$), where $\pi_{ij}(R)=\Set{\la t[y_i],t[y_j]\ra \mid t\in R}$. \blue{COMMENT: I think that the conclusion is not the best place to put such notations. Maybe this could become a corollary earlier.}

%Regarding future work, we will consider how to extend the results to general CSPs with finite $n$-ary constraints.

\bibliographystyle{splncs03}
\bibliography{reference}

\begin{thebibliography}{10}
\providecommand{\url}[1]{\texttt{#1}}
\providecommand{\urlprefix}{URL }

\bibitem{barto2014collapse}
Barto, L.: The collapse of the bounded width hierarchy. Journal of Logic and
  Computation  26(3),  923--943 (2016)

\bibitem{barto2014constraint}
Barto, L., Kozik, M.: Constraint satisfaction problems solvable by local
  consistency methods. Journal of the ACM  61(1),  3:1--3:19 (2014)

\bibitem{van1995minimality}
van Beek, P., Dechter, R.: On the minimality and global consistency of
  row-convex constraint networks. Journal of the ACM  42(3),  543--561 (1995)

\bibitem{bessiere2011efficient}
Bessi{\`e}re, C., Cardon, S., Debruyne, R., Lecoutre, C.: Efficient algorithms
  for singleton arc consistency. Constraints  16(1),  25--53 (2011)

\bibitem{bessiere2005optimal}
Bessi{\`e}re, C., R{\'e}gin, J.C., Yap, R.H., Zhang, Y.: An optimal
  coarse-grained arc consistency algorithm. Artificial Intelligence  165(2),
  165--185 (2005)

\bibitem{bliek1999path}
Bliek, C., Sam-Haroud, D.: Path consistency on triangulated constraint graphs.
  In: IJCAI. pp. 456--461 (1999)

\bibitem{BulatovJ03}
Bulatov, A.A., Jeavons, P.: An algebraic approach to multi-sorted constraints.
  In: CP. pp. 183--198 (2003)

\bibitem{CarbonnelC15}
Carbonnel, C., Cooper, M.C.: Tractability in constraint satisfaction problems:
  A survey. Constraints  21(2),  115--144 (2016)

\bibitem{chen2011arc}
Chen, H., Dalmau, V., Gru{\ss}ien, B.: Arc consistency and friends. Journal of
  Logic and Computation  23(1),  87--108 (2011)

\bibitem{chmeiss1998efficient}
Chmeiss, A., J{\'e}gou, P.: Efficient path-consistency propagation.
  International Journal on Artificial Intelligence Tools  7(02),  121--142
  (1998)

\bibitem{cohen2006_CP_handbook}
Cohen, D., Jeavons, P.: The complexity of constraint languages. In: Handbook of
  Constraint Programming, pp. 169--204. Elsevier (2006)

\bibitem{DBLP:conf/ijcai/DebruyneB97}
Debruyne, R., Bessi{\`{e}}re, C.: Some practicable filtering techniques for the
  constraint satisfaction problem. In: IJCAI. pp. 412--417 (1997)

\bibitem{dechter2003constraint}
Dechter, R.: Constraint processing. Morgan Kaufmann (2003)

\bibitem{DechterMP91}
Dechter, R., Meiri, I., Pearl, J.: Temporal constraint networks. Artificial
  Intelligence  49(1-3),  61--95 (1991)

\bibitem{dechter1987network}
Dechter, R., Pearl, J.: Network-based heuristics for constraint-satisfaction
  problems. Artificial Intelligence  34(1),  1--38 (1987)

\bibitem{DevilleBH99}
Deville, Y., Barette, O., van Hentenryck, P.: {Constraint satisfaction over
  connected row convex constraints}. Artificial Intelligence  109(1-2),
  243--271 (1999)

\bibitem{Freuder82}
Freuder, E.C.: A sufficient condition for backtrack-free search. Journal of the
  ACM  29(1),  24--32 (1982)

\bibitem{fulkerson1965incidence}
Fulkerson, D., Gross, O.: Incidence matrices and interval graphs. Pacific
  Journal of Mathematics  15(3),  835--855 (1965)

\bibitem{jeavons1998constraints}
Jeavons, P., Cohen, D., Cooper, M.C.: Constraints, consistency and closure.
  Artificial Intelligence  101(1),  251--265 (1998)

\bibitem{KongLLL15}
Kong, S., Li, S., Li, Y., Long, Z.: On tree-preserving constraints. In: CP. pp.
  244--261 (2015)

\bibitem{Kong2017}
Kong, S., Li, S., Li, Y., Long, Z.: On tree-preserving constraints. Annals of
  Mathematics and Artificial Intelligence  (2017)

\bibitem{DBLP:conf/lics/Kozik16}
Kozik, M.: Weak consistency notions for all the csps of bounded width. In:
  LICS. pp. 633--641 (2016)

\bibitem{kumar_geometric_2004}
Kumar, T.K.S.: On geometric {CSPs} with (near)-linear domains and max-distance
  constraints. In: Workshop on {Modelling} and {Solving} {Problems} with
  {Constraints}. pp. 125--138 (2004)

\bibitem{DBLP:conf/aips/Kumar05}
Kumar, T.K.S.: On the tractability of restricted disjunctive temporal problems.
  In: ICAPS. pp. 110--119 (2005)

\bibitem{DBLP:conf/aips/KumarR06}
Kumar, T.K.S., Russell, S.J.: On some tractable cases of logical filtering. In:
  ICAPS. pp. 83--92 (2006)

\bibitem{LiLW13}
Li, S., Liu, W., Wang, S.: Qualitative constraint satisfaction problems: an
  extended framework with landmarks. Artificial Intelligence  201,  32--58
  (2013)

\bibitem{mackworth1977consistency}
Mackworth, A.K.: Consistency in networks of relations. Artificial Intelligence
  8(1),  99--118 (1977)

\bibitem{maruyama1990structural}
Maruyama, H.: Structural disambiguation with constraint propagation. In: ACL.
  pp. 31--38 (1990)

\bibitem{mohr1986arc}
Mohr, R., Henderson, T.C.: Arc and path consistency revisited. Artificial
  Intelligence  28(2),  225--233 (1986)

\bibitem{montanari1974networks}
Montanari, U.: Networks of constraints: fundamental properties and applications
  to picture processing. Information Science  7,  95--132 (1974)

\bibitem{planken2008p3c}
Planken, L., de~Weerdt, M., van~der Krogt, R.: {P}$^3${C}: A new algorithm for
  the simple temporal problem. In: ICAPS. vol. 2008, pp. 256--263 (2008)

\bibitem{DBLP:journals/ijait/Singh96}
Singh, M.: Path consistency revisited. International Journal on Artificial
  Intelligence Tools  5(1-2),  127--142 (1996)

\bibitem{sioutis2016efficiently}
Sioutis, M., Long, Z., Li, S.: Efficiently reasoning about qualitative
  constraints through variable elimination. In: SETN. pp. 1:1--1:10 (2016)

\bibitem{zhang2009solving}
Zhang, Y., Marisetti, S.: Solving connected row convex constraints by variable
  elimination. Artificial Intelligence  173(12),  1204--1219 (2009)

\bibitem{ZhangY03}
Zhang, Y., Yap, R.H.C.: Consistency and set intersection. In: IJCAI. pp.
  263--270 (2003)

\end{thebibliography}

\end{document}